\newtheorem{prop}{Proposition}
\newtheorem{thm}{Theorem}
\newtheorem{rem}{Remark}
\newtheorem{lem}{Lemma}
\begin{document}

\title{Neural Tangent Kernel: \\ Convergence and Generalization in Neural Networks}
\author{Arthur Jacot \\ École Polytechnique Fédérale de Lausanne \\ 
\texttt{arthur.jacot@netopera.net}
\And
Franck Gabriel \\ Imperial College London and École Polytechnique Fédérale de Lausanne \\ \texttt{franckrgabriel@gmail.com}
\And
Cl\'ement Hongler \\ École Polytechnique Fédérale de Lausanne \\ \texttt{clement.hongler@gmail.com}
}
\date{May 2018}

\maketitle

\begin{abstract}
At initialization, artificial neural networks (ANNs) are equivalent to Gaussian processes in the infinite-width limit \cite{Neal1996,Daniely, Matthews2017GaussianProcess,Lee2017,Matthews2018GaussianProcess}, thus connecting them to kernel methods. We prove that the evolution of an ANN during training can also be described by a kernel: during gradient descent on the parameters of an ANN, the network function $f_\theta$ (which maps input vectors to output vectors) follows the kernel gradient of the functional cost (which is convex, in contrast to the parameter cost) w.r.t. a new kernel: the Neural Tangent Kernel (NTK). This kernel is central to describe the generalization features of ANNs. While the NTK is random at initialization and varies during training, in the infinite-width limit it converges to an explicit limiting kernel and it stays constant during training. This makes it possible to study the training of ANNs in function space instead of parameter space. Convergence of the training can then be related to the positive-definiteness of the limiting NTK. We prove the positive-definiteness of the limiting NTK when the data is supported on the sphere and the non-linearity is non-polynomial.

We then focus on the setting of least-squares regression and show that in the infinite-width limit, the network function $f_\theta$ follows a linear differential equation during training. The convergence is fastest along the largest kernel principal components of the input data with respect to the NTK, hence suggesting a theoretical motivation for early stopping.

Finally we study the NTK numerically, observe its behavior for wide networks, and compare it to the infinite-width limit.
\end{abstract}
\section{Introduction}
Artificial neural networks (ANNs) have achieved impressive results in numerous areas of machine learning. While it has long been known that ANNs can approximate any function with sufficiently many hidden neurons \cite{Hornik1989, Leshno}, it is not known what the optimization of ANNs converges to. Indeed the loss surface of neural networks optimization problems is highly non-convex: it has a high number of saddle points which may slow down the convergence \citep{Dauphin2014}. A number of results \citep{Choromanska, Pascanu2014, Pennington2017} suggest that for wide enough networks, there are very few ``bad'' local minima, i.e. local minima with much higher cost than the global minimum. More recently, the investigation of the geometry of the loss landscape at initialization has been the subject of a precise study \citep{Karakida2018}. The analysis of the dynamics of training in the large-width limit for shallow networks has seen recent progress as well \citep{Mei2018}. To the best of the authors knowledge, the dynamics of deep networks has however remained an open problem until the present paper: see the contributions section below.

A particularly mysterious feature of ANNs is their good generalization properties in spite of their usual over-parametrization \citep{Sagun}. It seems paradoxical that a reasonably large neural network can fit random labels, while still obtaining good test accuracy when trained on real data \citep{Zhang}. It can be noted that in this case, kernel methods have the same properties \citep{Belkin}.

In the infinite-width limit, ANNs have a Gaussian distribution described by a kernel \cite{Neal1996,Daniely, Matthews2017GaussianProcess,Lee2017,Matthews2018GaussianProcess}. These kernels are used in Bayesian inference or 
Support Vector Machines, yielding results comparable to ANNs trained with gradient descent \cite{Cho2009, Lee2017}. We will see that in the same limit, the behavior of ANNs during training is described by a related kernel, which we call the neural tangent network (NTK).

\subsection{Contribution}
We study the network function $f_\theta$ of an ANN, which maps an input vector to an output vector, where $\theta$ is the vector of the parameters of the ANN. In the limit as the widths of the hidden layers tend to infinity, the network function at initialization, $f_\theta$ converges to a Gaussian distribution \cite{Neal1996,Daniely, Matthews2017GaussianProcess,Lee2017,Matthews2018GaussianProcess}.

In this paper, we investigate fully connected networks in this infinite-width limit, and describe the dynamics of the network function $f_\theta$ during training:
\begin{itemize}
\item
During gradient descent, we show that the dynamics of $f_\theta$ follows that of the so-called \emph{kernel gradient descent} in function space with respect to a limiting kernel, which only depends on the depth of the network, the choice of nonlinearity and the initialization variance.
\item
The convergence properties of ANNs during training can then be related to the positive-definiteness of the infinite-width limit NTK. In the case when the dataset is supported on a sphere, we prove this positive-definiteness using recent results on dual activation functions \cite{Daniely}. The values of the network function $f_\theta$ outside the training set is described by the NTK, which is crucial to understand how ANN generalize.
\item
For a least-squares regression loss, the network function $f_{\theta}$ follows a linear differential equation in the infinite-width limit, and the eigenfunctions of the Jacobian are the kernel principal components of the input data. This shows a direct connection to kernel methods and motivates the use of early stopping to reduce overfitting in the training of ANNs.
\item
Finally we investigate these theoretical results numerically for an artificial dataset (of points on the unit circle) and for the MNIST dataset. In particular we observe that the behavior of wide ANNs is close to the theoretical limit.
\end{itemize}

\section{Neural networks}\label{sec:realization_function}
In this article, we consider fully-connected ANNs with layers numbered from $0$ (input) to $L$ (output), each containing $n_{0},\ldots,n_{L}$ neurons, and with a Lipschitz, twice differentiable nonlinearity function $\sigma:\mathbb{R}\to\mathbb{R}$, with bounded second derivative \footnote{While these smoothness assumptions greatly simplify the proofs of our results, they do not seem to be strictly needed for the results to hold true.}.

This paper focuses on the ANN \emph{realization function} $F^{(L)} : \mathbb{R}^{P} \to \mathcal{F}$, mapping parameters $\theta$ to functions $f_\theta$ in a space $\mathcal{F}$. The dimension of the parameter space is $P = \sum_{\ell=0}^{L-1} (n_\ell + 1) n_{\ell+1}$: the parameters consist of the connection matrices $W^{(\ell)} \in \mathbb{R}^{n_\ell \times n_{\ell+1}}$ and bias vectors $b^{(\ell)} \in \mathbb{R}^{n_{\ell+1}}$ for $\ell=0, ..., L-1$.  In our setup, the parameters are initialized as iid Gaussians $\mathcal{N}(0, 1)$.

For a fixed distribution $p^{in}$ on the input space $\mathbb{R}^{n_0}$, the function space $\mathcal{F}$ is defined as $\left\{ f : \mathbb{R}^{n_0} \to \mathbb{R}^{n_L} \right\} $. On this space, we consider the seminorm $|| \cdot ||_{p^{in}} $, defined in terms of the bilinear form
$$
\left<f, g\right>_{p^{in}} = \mathbb{E}_{x \sim p^{in}}\left[ f(x)^T g(x) \right].
$$

In this paper, we assume that the input distribution $p^{in}$ is the empirical distribution on a finite dataset $x_1, ..., x_N$, i.e the sum of Dirac measures $ \frac{1}{N} \sum_{i=0}^N \delta_{x_i}$. 

We define the network function by $f_\theta(x) := \tilde{\alpha}^{(L)}(x; \theta)$, where the functions $\tilde{\alpha}^{(\ell)}(\cdot; \theta) : \mathbb{R}^{n_0} \to\mathbb{R}^{n_\ell}$ (called \emph{preactivations}) and $\alpha^{(\ell)}(\cdot; \theta):\mathbb{R}^{n_0} \to\mathbb{R}^{n_\ell}$ (called \emph{activations}) are defined from the $0$-th to the $L$-th layer by:
\begin{align*}
    \alpha^{(0)}(x; \theta) &= x \\
    \tilde{\alpha}^{(\ell+1)}(x; \theta) &= \frac{1}{\sqrt{n_\ell}}W^{(\ell)} \alpha^{(\ell)}(x; \theta) + \beta b^{(\ell)} \\
    \alpha^{(\ell)}(x; \theta) &= \sigma(\tilde{\alpha}^{(\ell)}(x; \theta)),
\end{align*}
where the nonlinearity $\sigma$ is applied entrywise. The scalar $\beta > 0$ is a parameter which allows us to tune the influence of the bias on the training.

\begin{rem}\label{rem:parametrization}
Our definition of the realization function $F^{(L)}$ slightly differs from the classical one. Usually, the factors $\frac{1}{\sqrt{n_\ell}}$ and the parameter $\beta$ are absent and the parameters are initialized using what is sometimes called LeCun initialization, taking $W^{(\ell)}_{ij} \sim \mathcal{N}(0, \frac{1}{n_\ell})$ and $b^{(\ell)}_{j} \sim \mathcal{N}(0, 1)$ (or sometimes $b^{(\ell)}_{j} = 0$) to compensate. While the set of representable functions $F^{(L)}(\mathbb{R}^P)$ is the same for both parametrizations (with or without the factors $\frac{1}{\sqrt{n_\ell}}$ and $\beta$), the derivatives of the realization function with respect to the connections $\partial_{W_{ij}^{(\ell)}} F^{(L)}$ and bias $\partial_{b_{j}^{(\ell)}} F^{(L)}$ are scaled by $\frac{1}{\sqrt{n_\ell}}$ and $\beta$ respectively in comparison to the classical parametrization.

The factors $\frac{1}{\sqrt{n_\ell}}$ are key to obtaining a consistent asymptotic behavior of neural networks as the widths of the hidden layers $n_1, ..., n_{L-1}$ grow to infinity. However a side-effect of these factors is that they reduce greatly the influence of the connection weights during training when $n_\ell$ is large: the factor $\beta$ is introduced to balance the influence of the bias and connection weights. In our numerical experiments, we take $\beta=0.1$ and use a learning rate of $1.0$, which is larger than usual, see Section \ref{sec:numerical-experiments}. This gives a behaviour similar to that of a classical network of width $100$ with a learning rate of $0.01$.
\end{rem}

\section{Kernel gradient}\label{sec:kernel_gradient}
The training of an ANN consists in optimizing $f_\theta$ in the function space $\mathcal{F}$ with respect to a functional cost $C : \mathcal{F} \to \mathbb{R}$, such as a regression or cross-entropy cost. Even for a convex functional cost $C$, the composite cost $C \circ F^{(L)} : \mathbb{R}^P \to \mathbb{R}$ is in general highly non-convex \citep{Choromanska}. We will show that during training, the network function $f_\theta$ follows a descent along the kernel gradient with respect to the Neural Tangent Kernel (NTK) which we introduce in Section \ref{sec:neural_tangent_kernel}. This makes it possible to study the training of ANNs in the function space $\mathcal{F}$, on which the cost $C$ is convex.

A \emph{multi-dimensional kernel}  $K$ is a function $\mathbb{R}^{n_{0}}\times\mathbb{R}^{n_{0}}\to\mathbb{R}^{n_{L}\times n_{L}}$, which maps any pair $(x,x')$ to an $n_{L}\times n_{L}$-matrix such that $K(x, x') = K(x', x)^T$ (equivalently $K$ is a symmetric tensor in $\mathcal{F}\otimes\mathcal{F}$). Such a kernel defines a bilinear map on $\mathcal{F}$, taking the expectation over independent $x, x' \sim p^{in}$:
$$
\left<f, g \right>_K := \mathbb{E}_{x, x' \sim p^{in}} \left[f(x)^T K(x, x') g(x') \right].
$$
The kernel $K$ is \emph{positive definite with respect to  $|| \cdot ||_{p^{in}} $}  if $ || f ||_{p^{in}} > 0 \implies || f ||_K > 0$.

We denote by $ \mathcal{F}^{*} $ the dual of $ \mathcal{F} $ with respect to $ p^{in} $, i.e. the set of linear forms $ \mu: \mathcal {F} \to \mathbb{R} $ of the form $ \mu = \langle d, \cdot \rangle_{p^{in}} $ for some $ d \in \mathcal{F} $.  Two elements of $\mathcal{F}$ define the same linear form if and only if they are equal on the data. The constructions in the paper do not depend on the element $d \in \mathcal{F}$ chosen in order to represent $ \mu$ as $\langle d, \cdot \rangle_{p^{in}}$. Using the fact that the partial application of the kernel $K_{i, \cdot}(x, \cdot)$ is a function in $\mathcal{F}$, we can define a map $\Phi_K : \mathcal{F}^* \to \mathcal{F}$ mapping a dual element $\mu = \left<d, \cdot\right>_{p^{in}}$ to the function $f_\mu = \Phi_K(\mu)$ with values:
$$
f_{\mu, i}(x) = \mu K_{i, \cdot}(x, \cdot) =  \left< d, K_{i, \cdot}(x, \cdot) \right>_{p^{in}}.
$$

For our setup, which is that of a finite dataset $ x_1, \ldots, x_n \in \mathbb{R}^{n_0} $, the cost functional $ C $ only depends on the values of $ f \in \mathcal{F} $ at the data points. As a result, the (functional) derivative of the cost $ C $ at a point $f_0\in\mathcal{F}$ can be viewed as an element of $ \mathcal{F}^{*} $, which we write $ \partial_f^{in} C |_{f_0} $. We denote by  $ d |_{f_0} \in \mathcal{F} $, a corresponding dual element, such that $ \partial_f^{in} C|_{f_0}  = \langle d |_{f_0},  \cdot \rangle_{p^{in}} $.

The \emph{kernel gradient} $\nabla_K C|_{f_0} \in \mathcal{F}$ is defined as $\Phi_K \left( \partial_f^{in} C|_{f_0} \right) $. In contrast to $ \partial_f^{in} C $ which is only defined on the dataset,  the kernel gradient generalizes to values $x$ outside the dataset thanks to the kernel $K$: 
\[
	\nabla_K C|_{f_0} (x) = \frac{1}{N} \sum_{j=1}^{N} K(x, x_j) d|_{f_0} (x_j) .
\] 

A time-dependent function $ f (t) $ follows the \emph{kernel gradient descent with respect to $K$} if it satisfies the differential equation  
\[
	\partial_t f(t) = -\nabla_{K}C|_{f(t)}.
\] During kernel gradient descent, the cost $C(f(t))$ evolves as
\[
	\partial_t C|_{f(t)} = -\left<d|_{f(t)}, \nabla_{K}C|_{f(t)} \right>_{p^{in}}=-\left\|d|_{f(t)} \right\|_{K}^{2}.
\]
Convergence to a critical point of $C$ is hence guaranteed if the kernel $K$ is positive definite with respect to $|| \cdot ||_{p^{in}}$: the cost is then strictly decreasing except at points such that $||d|_{f(t)}||_{p^{in}} = 0$. If the cost is convex and bounded from below, the function $f(t)$ therefore converges to a global minimum as $t \to \infty$.

\subsection{Random functions approximation} \label{sec:rfa}
As a starting point to understand the convergence of ANN gradient descent to kernel gradient descent in the infinite-width limit, we introduce a simple model, inspired by the approach of \cite{Rahimi2007}. 

A kernel $K$ can be approximated by a choice of $P$ random functions $f^{(p)}$ sampled independently from any distribution on $\mathcal{F}$ whose (non-centered) covariance is given by the kernel $K$:
\[
\mathbb{E}[f_{k}^{(p)}(x)f_{k'}^{(p)}(x')]=K_{kk'}(x,x').
\]

These functions define a random linear parametrization $F^{lin}:\mathbb{R}^{P}\to\mathcal{F}$
\[
\theta \mapsto f_{\theta}^{lin}=\frac{1}{\sqrt{P}}\sum_{p=1}^{P}\theta_{p}f^{(p)}.
\]

The partial derivatives of the parametrization are given by 
\[
	\partial_{\theta_p} F^{lin}(\theta) = \frac{1}{\sqrt{P}} f^{(p)}.
\] 

Optimizing the cost $C\circ F^{lin}$ through gradient descent, the parameters follow the ODE:
\[
\partial_t  \theta_{p}(t)=-\partial_{\theta_{p}}(C\circ F^{lin})(\theta(t))=-\frac{1}{\sqrt{P}}\partial_{f}^{in}C|_{f_{\theta(t)}^{lin}}\;f^{(p)}=-\frac{1}{\sqrt{P}} \left<d|_{f_{\theta(t)}^{lin}}, f^{(p)}\right>_{p^{in}}.
\]
As a result the function $f_{\theta(t)}^{lin}$ evolves according to
\[
\partial_t f_{\theta(t)}^{lin}=\frac{1}{\sqrt{P}}\sum_{p=1}^{P} \partial_t \theta_{p}(t)f^{(p)}=-\frac{1}{P}\sum_{p=1}^{P}\left<d|_{f_{\theta(t)}^{lin}}, f^{(p)}\right>_{p^{in}} f^{(p)},
\]
where the right-hand side is equal to the kernel gradient $-\nabla_{\tilde{K}} C$ with respect to the \emph{tangent kernel} 
$$
\tilde{K} =\sum_{p=1}^P \partial_{\theta_p} F^{lin}(\theta) \otimes  \partial_{\theta_p} F^{lin}(\theta) = \frac{1}{P}\sum_{p=1}^P f^{(p)} \otimes f^{(p)}.
$$ 
This is a random $n_L$-dimensional kernel with values
$
\tilde{K}_{ii'}(x, x') = \frac{1}{P} \sum_{p=1}^P  f^{(p)}_i(x) f^{(p)}_{i'}(x').
$

Performing gradient descent on the cost $ C \circ F^{lin}$ is therefore equivalent to performing kernel gradient descent with the tangent kernel $\tilde{K}$ in the function space. In the limit as $P \to \infty$, by the law of large numbers, the (random) tangent kernel $\tilde{K}$ tends to the fixed kernel $K$, which makes this method an approximation of kernel gradient descent with respect to the limiting kernel $K$.

\section{Neural tangent kernel} \label{sec:neural_tangent_kernel}
For ANNs trained using gradient descent on the composition $C \circ F^{(L)}$, the situation is very similar to that studied in the Section \ref{sec:rfa}. During training, the network function $f_\theta$ evolves along the (negative) kernel gradient
$$
\partial_t  f_{\theta(t)} = -\nabla_{\Theta^{(L)}} C|_{f_{\theta(t)}}
$$
with respect to the \emph{neural tangent kernel} (NTK)
\begin{align*}
 \Theta^{(L)}(\theta) &= \sum_{p=1}^P \partial_{\theta_p} F^{(L)}(\theta) \otimes \partial_{\theta_p} F^{(L)}(\theta).
\end{align*}
However, in contrast to $F^{lin}$, the realization function $F^{(L)}$ of ANNs is not linear. As a consequence, the derivatives $\partial_{\theta_p} F^{(L)}(\theta)$ and the neural tangent kernel depend on the parameters $\theta$. The NTK is therefore random at initialization and varies during training, which makes the analysis of the convergence of $f_\theta$ more delicate.

In the next subsections, we show that, in the infinite-width limit, the NTK becomes deterministic at initialization and stays constant during training. Since $f_\theta$ at initialization is Gaussian in the limit, the asymptotic behavior of $f_\theta$ during training can be explicited in the function space $ \mathcal{F} $.

\subsection{Initialization}
As observed in \cite{Neal1996,Daniely, Matthews2017GaussianProcess,Lee2017,Matthews2018GaussianProcess}, the output functions $f_{\theta, i}$ for $i=1, ..., n_L$ tend to iid Gaussian processes in the infinite-width limit (a proof in our setup is given in the appendix):
\begin{prop}\label{prop:output_limit}
For a network of depth $L$ at initialization, with a Lipschitz nonlinearity $\sigma$, and in the limit as $n_1, ..., n_{L-1} \to \infty$, the output functions $f_{\theta, k}$, for $k=1, ..., n_L$, tend (in law) to iid centered Gaussian processes of covariance $\Sigma^{(L)}$, where $\Sigma^{(L)}$ is defined recursively by:
\begin{align*}
\Sigma^{(1)}(x, x') &= \frac{1}{n_0} x^T x' + \beta^2 \\
\Sigma^{(L+1)}(x, x') &= \mathbb{E}_{f\sim\mathcal{N}\left(0,\Sigma^{\left(L\right)}\right)}[\sigma(f(x)) \sigma(f(x'))] + \beta^2,
\end{align*}
taking the expectation with respect to a centered Gaussian process $f$ of covariance $\Sigma^{(L)}$.
\end{prop}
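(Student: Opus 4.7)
I would proceed by induction on the depth $L$, at each stage establishing the stronger statement that the preactivation coordinates $\tilde\alpha^{(L)}_1,\dots,\tilde\alpha^{(L)}_{n_L}$ converge jointly (as the hidden widths $n_1,\dots,n_{L-1}\to\infty$, in this sequential order) to a collection of iid centered Gaussian processes on $\mathbb{R}^{n_0}$ with covariance $\Sigma^{(L)}$. Since $p^{in}$ is supported on the finite set $\{x_1,\dots,x_N\}$ (and the statement at unseen $x$ can be obtained by augmenting this set by $\{x\}$), it suffices to prove convergence of the finite-dimensional marginals at an arbitrary finite collection of input points; no tightness/function-space issue arises.

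\paragraph{Base case.} For $L=1$ we have $\tilde\alpha^{(1)}_k(x) = \frac{1}{\sqrt{n_0}}\sum_j W^{(0)}_{jk} x_j + \beta b^{(0)}_k$. Since the $W^{(0)}_{jk}$ and $b^{(0)}_k$ are iid $\mathcal N(0,1)$, each $\tilde\alpha^{(1)}_k$ is exactly a centered Gaussian process with covariance $\frac{1}{n_0}x^T x' + \beta^2 = \Sigma^{(1)}(x,x')$, and distinct coordinates $k$ are independent by independence of the underlying parameters. No limit is required here.

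\paragraph{Inductive step.} Assume the result holds at depth $L$. Condition on the layer-$L$ preactivations, hence on the activations $\alpha^{(L)}_j = \sigma(\tilde\alpha^{(L)}_j)$. Conditionally, $\tilde\alpha^{(L+1)}_k(x) = \frac{1}{\sqrt{n_L}}\sum_{j=1}^{n_L} W^{(L)}_{jk}\,\alpha^{(L)}_j(x) + \beta b^{(L)}_k$ is a sum of $n_L$ independent centered Gaussians (in the variables $W^{(L)}_{\cdot k}, b^{(L)}_k$), so it is itself centered Gaussian with conditional covariance
\[
\widehat\Sigma^{(L+1)}(x,x') \;=\; \frac{1}{n_L}\sum_{j=1}^{n_L} \sigma\bigl(\tilde\alpha^{(L)}_j(x)\bigr)\sigma\bigl(\tilde\alpha^{(L)}_j(x')\bigr) \;+\; \beta^2,
\]
and distinct output coordinates $k$ are conditionally independent, with identical conditional law. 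By the inductive hypothesis the $\tilde\alpha^{(L)}_j$ are asymptotically iid across $j$ with law $\mathcal N(0,\Sigma^{(L)})$, and $\sigma$ is Lipschitz, so each summand above has finite second moment at the data points. A law of large numbers (applied over $j$, on the finite grid of input pairs) then yields $\widehat\Sigma^{(L+1)}(x,x') \to \Sigma^{(L+1)}(x,x')$ in probability. Combining this with the conditional Gaussianity via characteristic functions (the conditional characteristic function is $\exp(-\tfrac12 \xi^T \widehat\Sigma^{(L+1)}\xi)$, which converges in probability to $\exp(-\tfrac12\xi^T\Sigma^{(L+1)}\xi)$, and is bounded so dominated convergence applies to the unconditional one) gives joint convergence of the $\tilde\alpha^{(L+1)}_k$ to iid $\mathcal N(0,\Sigma^{(L+1)})$, completing the induction.

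\paragraph{Main obstacle.} The delicate point is handling the two layers of randomness cleanly: the empirical covariance $\widehat\Sigma^{(L+1)}$ is itself a random object that depends on layer-$L$ quantities whose distributional convergence is only guaranteed in the sequential limit $n_1\to\infty, \dots, n_{L-1}\to\infty$. One must therefore make the induction hypothesis strong enough to be usable: convergence of finite-dimensional joint laws of $(\tilde\alpha^{(L)}_j)_{j\ge 1}$ to iid Gaussian processes, which allows the LLN for the empirical second moments of $\sigma(\tilde\alpha^{(L)}_j)$ to be invoked as $n_L\to\infty$ \emph{after} the inner limits, with the Lipschitz bound on $\sigma$ providing the required integrability. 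Once this is set up, the conditional Gaussianity-plus-LLN argument for the characteristic functions closes the induction.
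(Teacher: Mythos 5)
Your proposal is correct and follows essentially the same route as the paper's proof: induction on depth, viewing the $(L+1)$-network as an $L$-network composed with a nonlinearity and a random affine map, conditioning on the layer-$L$ activations to get conditional Gaussianity with an empirical covariance, and applying the law of large numbers in the sequential limit to show that this covariance becomes deterministic. Your characteristic-function step merely makes explicit the paper's remark that the conditioned and unconditioned laws coincide once the covariance is deterministic.
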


\begin{rem} \label{rem:no-problem-with-gauss-meas}
Strictly speaking, the existence of a suitable Gaussian measure with covariance $\Sigma^{(L)}$ is not needed: we only deal with the values of $f$ at $x, x'$ (the joint measure on $f(x), f(x')$ is simply a Gaussian vector in 2D). For the same reasons, in the proof of Proposition \ref{prop:output_limit} and Theorem \ref{thm:convergence_NTK_initialization}, we will freely speak of Gaussian processes without discussing their existence.
\end{rem}

The first key result of our paper (proven in the appendix) is the following: in the same limit, the Neural Tangent Kernel (NTK) converges in probability to an explicit deterministic limit. 
\begin{thm}\label{thm:convergence_NTK_initialization}
For a network of depth $L$ at initialization, with a Lipschitz nonlinearity $\sigma$, and in the limit as the layers width $n_1, ..., n_{L-1} \to \infty$, the NTK $\Theta^{(L)}$ converges in probability to a deterministic limiting kernel: $$\Theta^{(L)} \to \Theta^{(L)}_\infty \otimes Id_{n_L}.$$
The scalar kernel $\Theta^{(L)}_\infty : \mathbb{R}^{n_0} \times \mathbb{R}^{n_0} \to \mathbb{R}$ is defined recursively by
\begin{align*}
    \Theta^{(1)}_\infty(x, x') &= \Sigma^{(1)}(x, x') \\
    \Theta^{(L+1)}_\infty(x, x') &=  \Theta^{(L)}_\infty(x, x') \dot{\Sigma}^{(L+1)}(x, x')  + \Sigma^{(L+1)}(x, x'),
\end{align*}
where
\[
	\dot{\Sigma}^{(L+1)}\left(x,x'\right)=
	\mathbb{E}_{f\sim\mathcal{N}\left(0,\Sigma^{\left(L\right)}\right)}\left[\dot{\sigma}\left(f\left(x\right)\right)\dot{\sigma}\left(f\left(x'\right)\right)\right],
\]
taking the expectation with respect to a centered Gaussian process $f$ of covariance $\Sigma^{(L)}$, and where $\dot{\sigma}$ denotes the derivative of $\sigma$.
\end{thm}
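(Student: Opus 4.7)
The plan is to proceed by induction on $L$, relying on Proposition~\ref{prop:output_limit} to control the preactivations at every layer. For the base case $L=1$, the network is affine: $f_\theta(x) = \tfrac{1}{\sqrt{n_0}} W^{(0)} x + \beta b^{(0)}$. A direct computation of the partial derivatives $\partial_{W^{(0)}_{ij}} f_{\theta,k}(x) = \tfrac{1}{\sqrt{n_0}}\delta_{ki} x_j$ and $\partial_{b^{(0)}_i} f_{\theta,k}(x) = \beta \delta_{ki}$ gives
\[
\Theta^{(1)}_{kk'}(x, x') = \delta_{kk'}\left(\tfrac{1}{n_0} x^T x' + \beta^2\right) = \delta_{kk'}\,\Sigma^{(1)}(x, x'),
\]
which is already deterministic and matches the claim with $\Theta^{(1)}_\infty = \Sigma^{(1)}$.

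For the inductive step, assume the conclusion holds for depth $L$ and recall from Proposition~\ref{prop:output_limit} that the preactivations $\tilde\alpha^{(L)}_k(\cdot)$ at layer $L$ become iid Gaussian processes of covariance $\Sigma^{(L)}$ in the limit. I would split the parameters of the depth-$(L+1)$ network into the ``old'' ones $\theta_{<L}$ feeding into layer $L$, and the ``new'' ones $W^{(L)}, b^{(L)}$ connecting layer $L$ to layer $L+1$, writing $\Theta^{(L+1)} = \Theta^{(L+1),\mathrm{new}} + \Theta^{(L+1),\mathrm{old}}$. The new-parameter piece is elementary: since $\partial_{W^{(L)}_{jk}} \tilde\alpha^{(L+1)}_i(x) = \tfrac{1}{\sqrt{n_L}}\delta_{ij}\alpha^{(L)}_k(x)$ and $\partial_{b^{(L)}_j}\tilde\alpha^{(L+1)}_i(x)=\beta\delta_{ij}$, one obtains
\[
\Theta^{(L+1),\mathrm{new}}_{ii'}(x,x') = \delta_{ii'}\left(\tfrac{1}{n_L}\sum_{k=1}^{n_L} \sigma(\tilde\alpha^{(L)}_k(x))\sigma(\tilde\alpha^{(L)}_k(x')) + \beta^2\right),
\]
and a law of large numbers over the $n_L$ asymptotically iid neurons, combined with Proposition~\ref{prop:output_limit}, pushes this to $\delta_{ii'}\Sigma^{(L+1)}(x,x')$.

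The old-parameter piece requires the chain rule: $\partial_{\theta_p}\tilde\alpha^{(L+1)}_i(x) = \tfrac{1}{\sqrt{n_L}}\sum_k W^{(L)}_{ik}\dot\sigma(\tilde\alpha^{(L)}_k(x))\partial_{\theta_p}\tilde\alpha^{(L)}_k(x)$. Summing over $p\in\theta_{<L}$ introduces exactly the induction-hypothesis quantity $\Theta^{(L)}_{kk'}(x,x')\approx\delta_{kk'}\Theta^{(L)}_\infty(x,x')$, which collapses the double sum over $k,k'$ to
\[
\tfrac{1}{n_L}\sum_k W^{(L)}_{ik}W^{(L)}_{i'k}\,\dot\sigma(\tilde\alpha^{(L)}_k(x))\,\dot\sigma(\tilde\alpha^{(L)}_k(x'))\,\Theta^{(L)}_\infty(x,x').
\]
Because $W^{(L)}$ is independent of $\theta_{<L}$, conditioning on $\theta_{<L}$ and applying a law of large numbers (using $\mathbb{E}[W^{(L)}_{ik}W^{(L)}_{i'k}]=\delta_{ii'}$) reduces this to $\delta_{ii'}\cdot\tfrac{1}{n_L}\sum_k\dot\sigma(\tilde\alpha^{(L)}_k(x))\dot\sigma(\tilde\alpha^{(L)}_k(x'))\,\Theta^{(L)}_\infty(x,x')$, and a second LLN together with Proposition~\ref{prop:output_limit} yields $\delta_{ii'}\dot\Sigma^{(L+1)}(x,x')\,\Theta^{(L)}_\infty(x,x')$. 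Adding both pieces recovers the stated recursion.

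The main obstacle is that three asymptotic statements are being chained inside the inductive step — the Gaussian-process limit of $\tilde\alpha^{(L)}$, the induction hypothesis on $\Theta^{(L)}$, and laws of large numbers over the $n_L$ hidden neurons — and they share randomness through $\theta_{<L}$. I would handle this by conditioning on $\theta_{<L}$ so that $W^{(L)}$ is a fresh Gaussian matrix, and by using the Lipschitz assumption on $\sigma$ together with the bounded second derivative to transfer concentration bounds from $\tilde\alpha^{(L)}$ to $\dot\sigma(\tilde\alpha^{(L)})$ at the same rate (so that $\dot\Sigma^{(L+1)}$ is continuous in its Gaussian argument). The fact that the induction hypothesis is convergence \emph{in probability} rather than almost sure is exactly what allows the small error terms inherited from the previous layer to be absorbed into the concentration step at the current layer.
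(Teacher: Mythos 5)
Your proposal is correct and follows essentially the same route as the paper's proof: induction on depth with the same base case, the same split of the parameters into those of the first $L$ layers and those of the top layer, the same chain-rule computation that collapses the double sum over hidden neurons via the induction hypothesis $\Theta^{(L)}_{ii'} \to \Theta^{(L)}_\infty \delta_{ii'}$, and a law of large numbers over the $n_L$ asymptotically iid neurons. The only cosmetic difference is that the paper applies a single LLN to the product $W^{(L)}_{ik}W^{(L)}_{i'k'}\dot\sigma(\tilde\alpha^{(L)}_i(x))\dot\sigma(\tilde\alpha^{(L)}_{i'}(x'))$ at once (its expectation factorizes by independence of $W^{(L)}$ from the lower layers), whereas you phrase it as two successive LLN steps; the sequential limit $n_1,\ldots,n_{L-1}\to\infty$ followed by $n_L\to\infty$ is exactly how the paper resolves the shared-randomness issue you raise.
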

\begin{rem}
By Rademacher's theorem, $\dot{\sigma}$ is defined everywhere, except perhaps on a set of zero Lebesgue measure.
\end{rem}

Note that the limiting $\Theta^{(L)}_\infty$ only depends on the choice of $\sigma$, the depth of the network and the variance of the parameters at initialization (which is equal to $1$ in our setting).

\subsection{Training}
Our second key result is that the NTK stays asymptotically constant during training. This applies for a slightly more general definition of training: the parameters are updated according to a training direction $d_t \in \mathcal{F}$:
$$
\partial_t \theta_p(t) = \left< \partial_{\theta_p} F^{(L)}(\theta(t)), d_t \right>_{p^{in}}.
$$
In the case of gradient descent, $d_t = -d|_{f_{\theta(t)}}$ (see Section \ref{sec:kernel_gradient}), but the direction may depend on another network, as is the case for e.g. Generative Adversarial Networks \cite{Goodfellow2014}. We only assume that the integral $\int_0^T \| d_t \|_{p^{in}} dt$ stays stochastically bounded as the width tends to infinity, which is verified for e.g. least-squares regression, see Section \ref{sec:least-squares}.

\begin{thm}\label{thm:conv-ntk-training}

Assume that $ \sigma $ is a Lipschitz, twice differentiable nonlinearity function, with bounded second derivative. For any $T$ such that the integral $\int_0^T \| d_t \|_{p^{in}} dt$ stays stochastically bounded, as  $n_1, ..., n_{L-1} \to \infty$, we have, uniformly for $t\in[0, T]$,
$$
\Theta^{(L)}(t) \to \Theta^{(L)}_\infty \otimes Id_{n_L}.
$$
As a consequence, in this limit, the dynamics of $f_\theta$ is described by the differential equation
\begin{align*}
    \partial_t f_{\theta(t)} = \Phi_{\Theta^{(L)}_\infty \otimes Id_{n_L}} \left( \left<d_{t}, \cdot \right>_{p^{in}} \right).
\end{align*}
\end{thm}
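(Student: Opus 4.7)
My strategy is induction on the depth $L$, combined with a quantitative continuity argument. Theorem 1 gives convergence of $\Theta^{(L)}(0)$ to the deterministic limit $\Theta^{(L)}_\infty \otimes Id_{n_L}$, so the task reduces to showing $\Theta^{(L)}(t) - \Theta^{(L)}(0) \to 0$ uniformly on $[0,T]$ as $n_1, \ldots, n_{L-1} \to \infty$. Once this is established, the stated evolution equation for $f_{\theta(t)}$ follows immediately from the chain rule: $\partial_t f_{\theta(t)} = \sum_p \partial_{\theta_p} F^{(L)}(\theta(t)) \, \partial_t \theta_p(t) = \Phi_{\Theta^{(L)}(t)}(\langle d_t, \cdot\rangle_{p^{in}})$, and substituting the uniform limit of the NTK yields the claim.

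The heart of the argument is bounding the time-derivative of the NTK. Differentiating $\Theta^{(L)} = \sum_p \partial_{\theta_p} F^{(L)} \otimes \partial_{\theta_p} F^{(L)}$ and using $\partial_t \theta_q = \langle \partial_{\theta_q} F^{(L)}, d_t\rangle_{p^{in}}$ gives
$$\partial_t \Theta^{(L)}(t) = \sum_{p,q} \bigl[ \partial_{\theta_p}\partial_{\theta_q} F^{(L)} \otimes \partial_{\theta_p} F^{(L)} + (\text{symmetric}) \bigr]\, \langle \partial_{\theta_q} F^{(L)}, d_t\rangle_{p^{in}}.$$
The key structural fact is that, because of the $1/\sqrt{n_\ell}$ prefactors in the parametrization, second derivatives $\partial_{\theta_p}\partial_{\theta_q} F^{(L)}$ carry an extra suppressing factor of $1/\sqrt{\min_\ell n_\ell}$ relative to first derivatives. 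Once the first-derivative factors are organized into NTK-like contractions (which are $O(1)$), this extra factor forces the entire right-hand side to be $o(1)$ in the width, uniformly in $\theta$ on any fixed bounded set.

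To make this precise, I would proceed by induction on the layer index $\ell$, tracking for each layer: (a) that the activations $\alpha^{(\ell)}(x; \theta(t))$ remain close in empirical $L^2$ to their values at $t=0$; (b) that the gradient norms $\sum_p \partial_{\theta_p}\tilde{\alpha}^{(\ell)} \otimes \partial_{\theta_p}\tilde{\alpha}^{(\ell)}$ remain stochastically bounded; and (c) that the Hessian contributions are suppressed by an extra $1/\sqrt{\min_\ell n_\ell}$. The recursion $\tilde{\alpha}^{(\ell+1)} = W^{(\ell)}\alpha^{(\ell)}/\sqrt{n_\ell} + \beta b^{(\ell)}$ propagates the bounds from layer $\ell$ to layer $\ell+1$, and the Lipschitz, twice-differentiable, bounded-second-derivative assumptions on $\sigma$ allow us to control the nonlinear step $\alpha^{(\ell)} = \sigma(\tilde{\alpha}^{(\ell)})$ as well as its differentials up to second order.

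The main obstacle is a bootstrap circularity: controlling $\partial_t \Theta^{(L)}(t)$ requires that parameters have not yet drifted far, but bounding the drift via the training ODE itself requires NTK estimates that hold throughout $[0,t]$. I would resolve this by a standard stopping-time argument: let $\tau$ be the first time any of the inductive quantities above exceeds a fixed constant threshold. The estimates valid on $[0,\tau]$, combined with the hypothesis that $\int_0^T \|d_t\|_{p^{in}} dt$ is stochastically bounded, show that the total parameter displacement is itself stochastically bounded (with each individual weight moving by only $O(1/\sqrt{n_\ell})$), so $\tau > T$ with probability tending to $1$ as the widths grow. This closes the induction and delivers uniform convergence on $[0,T]$.
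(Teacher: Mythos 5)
Your overall skeleton --- reduce the claim to showing $\Theta^{(L)}(t)-\Theta^{(L)}(0)\to 0$ uniformly on $[0,T]$, control the drift layer by layer, and close the bootstrap with a stopping-time argument --- has the right shape, and that last part genuinely parallels the paper's Lemma 1 and its Gr\"onwall bound on the quantity $A(t)$. But the step you call the ``key structural fact'' is where the entire proof lives, and as stated it is not correct. It is false that the second derivatives $\partial_{\theta_p}\partial_{\theta_q}F^{(L)}$ carry an extra factor $1/\sqrt{\min_\ell n_\ell}$ relative to the first derivatives entry by entry: already for a two-layer network the mixed derivative $\partial_{W^{(1)}_{ik}}\partial_{W^{(0)}_{ji}}f_{\theta,k}=\tfrac{1}{\sqrt{n_1 n_0}}\dot{\sigma}(\tilde{\alpha}^{(1)}_i)\,x_j$ has exactly the same $1/\sqrt{n_1}$ scaling as the first derivative $\partial_{W^{(1)}_{ik}}f_{\theta,k}=\tfrac{1}{\sqrt{n_1}}\sigma(\tilde{\alpha}^{(1)}_i)$, and there are vastly more Hessian entries than gradient entries, so no entrywise comparison can close the argument. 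What is true --- and what your computation of $\partial_t\Theta^{(L)}$ actually requires --- is that the Hessian, viewed as an operator on parameter space, has operator norm $O(1/\sqrt{n})$ while the gradient has $\ell^2$-norm $O(1)$. That is a theorem in its own right (it is essentially the content of later work on the constancy of the tangent kernel), not something one can read off from the $1/\sqrt{n_\ell}$ prefactors; establishing it demands precisely the contraction bookkeeping you defer to ``organizing the first-derivative factors into NTK-like contractions,'' together with uniform-in-time control of $\lVert W^{(\ell)}(t)/\sqrt{n_\ell}\rVert_{op}$, which is the paper's Lemma 1.

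For comparison, the paper never forms the Hessian of $F^{(L)}$ at all. It peels off the top layer, observes that the depth-$L$ subnetwork is itself trained along a back-propagated direction $d'_t=\dot{\sigma}(\tilde{\alpha}^{(L)})\bigl(\tfrac{1}{\sqrt{n_L}}W^{(L)}\bigr)^T d_t$, verifies via Lemma 1 that $\int_0^T\lVert d'_t\rVert_{p^{in}}dt$ stays stochastically bounded so the induction hypothesis applies and the subnetwork's NTK is already constant, and then only has to show that the remaining ingredients of $\Theta^{(L+1)}$ --- the columns $W^{(L)}_i$, the activations $\alpha^{(L)}_i$, and the values $\dot{\sigma}(\tilde{\alpha}^{(L)}_i)$ --- each move by $O(1/\sqrt{n_L})$; the bounded second derivative of $\sigma$ enters only at this last point. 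If you want to pursue your route, you must replace the asserted structural fact with an honest layer-wise bound on the Hessian-vector product $\sum_q\partial_{\theta_p}\partial_{\theta_q}F^{(L)}\langle\partial_{\theta_q}F^{(L)},d_t\rangle_{p^{in}}$, which in practice reduces to the same estimates the paper carries out.
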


\begin{rem}
As the proof of the theorem (in the appendix) shows, the variation during training of the individual activations in the hidden layers shrinks as their width grows. However their collective variation is significant, which allows the parameters of the lower layers to learn: in the formula of the limiting NTK  $\Theta^{(L+1)}_\infty(x, x')$ in Theorem \ref{thm:convergence_NTK_initialization}, the second summand $\Sigma^{(L+1)}$ represents the learning due to the last layer, while the first summand represents the learning performed by the lower layers. 
\end{rem}

As discussed in Section \ref{sec:kernel_gradient}, the convergence of kernel gradient descent to a critical point of the cost $C$ is guaranteed for positive definite kernels. The limiting NTK is positive definite if the span of the derivatives $\partial_{\theta_p} F^{(L)}$, $p=1, ..., P$ becomes dense in $\mathcal{F}$ w.r.t. the $p^{in}$-norm as the width grows to infinity. It seems natural to postulate that the span of the preactivations of the last layer (which themselves appear in $\partial_{\theta_p} F^{(L)}$, corresponding to the connection weights of the last layer) becomes dense in $\mathcal{F}$, for a large family of measures $ p^{in}$ and nonlinearities (see e.g. \cite{Hornik1989, Leshno} for classical theorems about ANNs and approximation). In the case when the dataset is supported on a sphere, the positive-definiteness of the limiting NTK can be shown using Gaussian integration techniques and existing positive-definiteness criteria, as given by the following proposition, proven in Appendix \ref{Appendix-4}:
\begin{prop}\label{prop:pos-def}
	 For a non-polynomial Lipschitz nonlinearity $ \sigma $, for any input dimension $ n_0 $, the restriction of the limiting NTK $ \Theta_\infty^{(L)} $ to the unit sphere $ \mathbb{S}^{n_0 - 1} = \{ x \in \mathbb{R}^{n_0} : x^T x =1 \} $ is positive-definite if $ L \geq 2 $. 
\end{prop}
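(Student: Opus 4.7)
The plan is to restrict everything to zonal kernels on $\mathbb{S}^{n_0-1}$ (using $x^T x = 1$), use Daniely's dual-activation framework to represent $\Sigma^{(2)}$ on the sphere as a power series in $\rho := x^T x'$ with \emph{every} Taylor coefficient strictly positive, and then promote strict positive-definiteness from $\Sigma^{(2)}$ to all $\Theta^{(L)}_\infty$ with $L \geq 2$ by induction.

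As a preliminary step, an easy induction from $\Sigma^{(1)}(x,x) = 1/n_0 + \beta^2$ shows that $\Sigma^{(L)}(x,x)$, $\dot\Sigma^{(L)}(x,x)$, and $\Theta^{(L)}_\infty(x,x)$ are constants on the sphere, so each of these kernels is a function of $\rho$. I would then rely on the following criterion: if $f(\rho) = \sum_{n \geq 0} c_n \rho^n$ has $c_n > 0$ for every $n$, then $(x, x') \mapsto f(x^T x')$ is strictly positive-definite on any finite set of distinct points of $\mathbb{S}^{n_0 - 1}$. Indeed, writing
\[
    v^T K v = \sum_{n \geq 0} c_n \left\| \sum_i v_i \, x_i^{\otimes n} \right\|^2
\]
in the symmetric-tensor inner product, and using that the functions $\{z \mapsto e^{x_i \cdot z}\}_i$ are linearly independent for distinct $x_i$, no $v \neq 0$ can annihilate every tensor $\sum_i v_i x_i^{\otimes n}$ simultaneously.

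The crux of the argument is then to establish strict positivity of every Taylor coefficient of $\Sigma^{(2)}$ in $\rho$. Set $c^2 := 1/n_0 + \beta^2$, so $\Sigma^{(1)}(x,x) = c^2$ on the sphere; a centered Gaussian $f$ of covariance $\Sigma^{(1)}$ factors as $f = c g$ with $g$ a unit-variance Gaussian of correlation
\[
    r = \frac{\Sigma^{(1)}(x,x')}{c^2} = a\rho + b, \quad a = \frac{1}{n_0 c^2} > 0, \quad b = \frac{\beta^2}{c^2} > 0,
\]
where positivity of $b$ crucially uses $\beta > 0$. Writing $\sigma_c(u) := \sigma(cu) = \sum_n \alpha_n h_n(u)$ in the probabilist Hermite basis, Daniely's dual-activation identity gives
\[
    \Sigma^{(2)}(x, x') - \beta^2 = \mathbb{E}[\sigma_c(g(x))\sigma_c(g(x'))] = \sum_{n \geq 0} \alpha_n^2 \, r^n.
\]
Non-polynomiality of $\sigma$ (hence of $\sigma_c$) forces infinitely many $\alpha_n \neq 0$. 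Expanding $(a\rho + b)^n$ by the binomial theorem, any single $n \geq k$ with $\alpha_n \neq 0$ contributes a strictly positive amount to the coefficient of $\rho^k$ (since $a, b > 0$), so every Taylor coefficient of $\Sigma^{(2)}$ in $\rho$ is strictly positive, yielding strict PD by the criterion above.

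Finally I would induct on $L$. For $L = 2$, $\Theta^{(2)}_\infty = \Sigma^{(1)}\dot\Sigma^{(2)} + \Sigma^{(2)}$ is strictly PD because $\Sigma^{(2)}$ is strictly PD and the Schur product $\Sigma^{(1)}\dot\Sigma^{(2)}$ of two PSD kernels is PSD. For $L \geq 3$, assuming $\Theta^{(L-1)}_\infty$ is strictly PD, I would invoke the standard fact that the Schur product of a strictly PD kernel with a PSD kernel of strictly positive diagonal is strictly PD; here $\dot\Sigma^{(L)}$ is a covariance (hence PSD) with diagonal $\mathbb{E}[\dot\sigma(f(x))^2] > 0$, since a Lipschitz, non-constant $\sigma$ has $\dot\sigma$ non-zero on a set of positive Lebesgue (hence positive Gaussian) measure. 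Adding the PSD kernel $\Sigma^{(L)}$ preserves strict PD. The principal obstacle is the power-series analysis of $\Sigma^{(2)}$: obtaining \emph{every} Taylor coefficient strictly positive (not just infinitely many) is what yields a dimension-free conclusion, and this is precisely the step where $\beta > 0$ must be used to break the parity restrictions that would otherwise arise for a purely odd or even activation $\sigma$.
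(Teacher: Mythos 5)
Your proof is correct, and while it shares the paper's overall skeleton (reduce everything to the positive-definiteness of $\Sigma^{(2)}$ on the sphere via the recursion $\Theta^{(L+1)}_\infty = \dot\Sigma^{(L+1)}\Theta^{(L)}_\infty + \Sigma^{(L+1)}$, then analyze $\Sigma^{(2)}$ through the Hermite/dual-activation expansion with the affine change of variable $\rho \mapsto a\rho + b$ that uses $\beta>0$ to break parity), it diverges at the two places where the paper leans on external results, and in both cases your route is more self-contained. First, where the paper invokes Gneiting's refinement of Sch\"onberg's theorem (positive-definiteness on $\mathbb{S}^{n_0-1}$ for all $n_0$ iff infinitely many even and infinitely many odd coefficients are positive), you prove directly that \emph{all} Taylor coefficients of $\nu(\rho)=\Sigma^{(2)}$ are strictly positive --- which the binomial expansion of $(a\rho+b)^n$ with $a,b>0$ and infinitely many nonzero Hermite coefficients indeed gives --- and then establish strict positive-definiteness by the elementary tensor-power identity $v^TKv=\sum_n c_n\bigl\lVert\sum_i v_i x_i^{\otimes n}\bigr\rVert^2$ together with linear independence of distinct exponentials. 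This is a weaker criterion than Gneiting's (sufficient, not necessary), but it suffices here precisely because $\beta>0$, and it removes the black-box citation; the rearrangement of the double series is justified since $a|\rho|+b\le 1$ and $\sum_n\alpha_n^2<\infty$. Second, where the paper propagates positive-definiteness through the $\Sigma^{(L)}$ hierarchy via a Gaussian non-degeneracy argument and then adds the PSD product term, you instead carry strict positive-definiteness through the $\Theta$ recursion itself, using the strict Schur product theorem (strictly PD Hadamard-multiplied by a PSD kernel with strictly positive diagonal is strictly PD) and the observation that $\dot\Sigma^{(L)}(x,x)=\mathbb{E}[\dot\sigma(f(x))^2]>0$ because a non-constant Lipschitz $\sigma$ has $\dot\sigma\neq 0$ on a set of positive Lebesgue (hence Gaussian) measure. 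Both inductions are valid; yours trades the paper's extra induction on the $\Sigma^{(L)}$ for a slightly stronger matrix-analytic lemma. No gaps.
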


\section{Least-squares regression}\label{sec:least-squares}
Given a goal function $f^*$ and input distribution $p^{in}$, the least-squares regression cost is
$$
 C(f) = \frac{1}{2} ||f - f^*||^2_{p^{in}} = \frac{1}{2}\mathbb{E}_{x \sim p^{in}} \left[\|f(x) - f^*(x)\|^2 \right].
$$
Theorems  \ref{thm:convergence_NTK_initialization} and \ref{thm:conv-ntk-training} apply to an ANN trained on such a cost. Indeed the norm of the training direction $\|d(f)\|_{p^{in}} = \| f^* - f \|_{p^{in}}$ is strictly decreasing during training, bounding the integral. We are therefore interested in the behavior of a function $f_t$ during kernel gradient descent with a kernel $K$ (we are of course especially interested in the case $K = \Theta^{(L)}_\infty \otimes Id_{n_L}$):
\begin{align*}
    \partial_t f_t = \Phi_K\left(\left<f^* - f, \cdot \right>_{p^{in}}\right).
\end{align*}
The solution of this differential equation can be expressed in terms of the map $\Pi : f \mapsto \Phi_K \left(\left<f, \cdot \right>_{p^{in}}\right)$:
$$
 f_t = f^* + e^{-t \Pi}(f_0 - f^*)
$$
where $e^{-t \Pi} = \sum_{k=0}^{\infty} \frac{(-t)^k}{k!} \Pi^k$ is the exponential of $-t \Pi$. If $\Pi$ can be diagonalized by eigenfunctions $f^{(i)}$ with eigenvalues $\lambda_i$, the exponential $e^{-t \Pi}$ has the same eigenfunctions with eigenvalues $e^{-t \lambda_i}$.

For a finite dataset $x_1, ..., x_N$ of size $N$, the map $\Pi$ takes the form
$$
\Pi(f)_k (x) = \frac{1}{N} \sum_{i=1}^N \sum_{k'=1}^{n_L} f_{k'}(x_i) K_{kk'}(x_i, x).
$$
The map $\Pi$ has at most $Nn_L$ positive eigenfunctions, and they are the kernel principal components $f^{(1)}, ..., f^{(N n_L)}$ of the data with respect to to the kernel $K$ \cite{Scholkopf, Shawe-Taylor}. The corresponding eigenvalues $\lambda_i$ is the variance captured by the component.

Decomposing the difference $(f^* - f_0) = \Delta^0_f + \Delta^1_f + ... + \Delta^{N n_L}_f$ along the eigenspaces of $\Pi$, the trajectory of the function $f_t$ reads
$$
 f_{t} = f^* + \Delta^0_f + \sum_{i=1}^{N n_L} e^{-t \lambda_i} \Delta^i_f,
$$
where $\Delta^0_f$ is in the kernel (null-space) of $\Pi$ and $\Delta^i_f \propto f^{(i)}$. 

The above decomposition can be seen as a motivation for the use of early stopping. The convergence is indeed faster along the eigenspaces corresponding to larger eigenvalues $\lambda_i$. Early stopping hence focuses the convergence on the most relevant kernel principal components, while avoiding to fit the ones in eigenspaces with lower eigenvalues (such directions are typically the `noisier' ones: for instance, in the case of the RBF kernel, lower eigenvalues correspond to high frequency functions).

Note that by the linearity of the map $e^{-t \Pi}$, if $f_0$ is initialized with a Gaussian distribution (as is the case for ANNs in the infinite-width limit), then $f_t$ is Gaussian for all times $t$. Assuming that the kernel is positive definite on the data (implying that the $Nn_L \times Nn_L$ Gram marix $\tilde{K}=\left(K_{kk'}(x_i, x_j) \right)_{ik, jk'}$ is invertible), as $t \to \infty$ limit, we get that $f_\infty = f^* + \Delta^0_f = f_0 - \sum_i \Delta^i_f$ takes the form
$$
 f_{\infty, k}(x) = \kappa_{x, k}^T \tilde{K}^{-1} y^* + \left(f_0(x) - \kappa_{x, k}^T \tilde{K}^{-1} y_0\right),
$$
with the $N n_l$-vectors $ \kappa_{x, k} $, $ y^* $ and $ y_0  $ given by
\begin{align*}
\kappa_{x, k} & = \left(K_{kk'}(x, x_i)\right)_{i, k'} \\
y^* & = \left(f^*_k(x_i)\right)_{i, k} \\
y_0 & = \left(f_{0, k}(x_i)\right)_{i, k}.
\end{align*}
The first term, the mean, has an important statistical interpretation: it is the maximum-a-posteriori (MAP) estimate given a Gaussian prior on functions $f_k\sim \mathcal{N}(0, \Theta^{(L)}_\infty)$  and the conditions $f_k(x_i)=f^*_k(x_i)$ . Equivalently, it is equal to the kernel ridge regression \cite{Shawe-Taylor} as the regularization goes to zero ($\lambda \to 0$). The second term is a centered Gaussian whose variance vanishes on the points of the dataset.

\section{Numerical experiments} \label{sec:numerical-experiments}
In the following numerical experiments, fully connected ANNs of various widths are compared to the theoretical infinite-width limit. We choose the size of the hidden layers to all be equal to the same value $n := n_1 = ... = n_{L-1}$ and we take the ReLU nonlinearity $\sigma(x)=\max(0,x)$. 

In the first two experiments, we consider the case $n_0=2$. Moreover, the input elements are taken on the unit circle. This can be motivated by the structure of high-dimensional data, where the centered data points often have roughly the same norm \footnote{The classical example is for data following a Gaussian distribution $\mathcal{N}(0, Id_{n_0})$: as the dimension $n_0$ grows, all data points have approximately the same norm $\sqrt{n_0}$.}.

In all experiments, we took $n_L = 1$ (note that by our results, a network with $n_L$ outputs behaves asymptotically like $n_L$ networks with scalar outputs trained independently). Finally, the value of the parameter $\beta$ is chosen as $0.1$, see Remark \ref{rem:parametrization}.

\begin{figure*}
    \centering
    \begin{minipage}[t]{0.49\textwidth}
        \centering
        \includegraphics[width=1.0\textwidth]{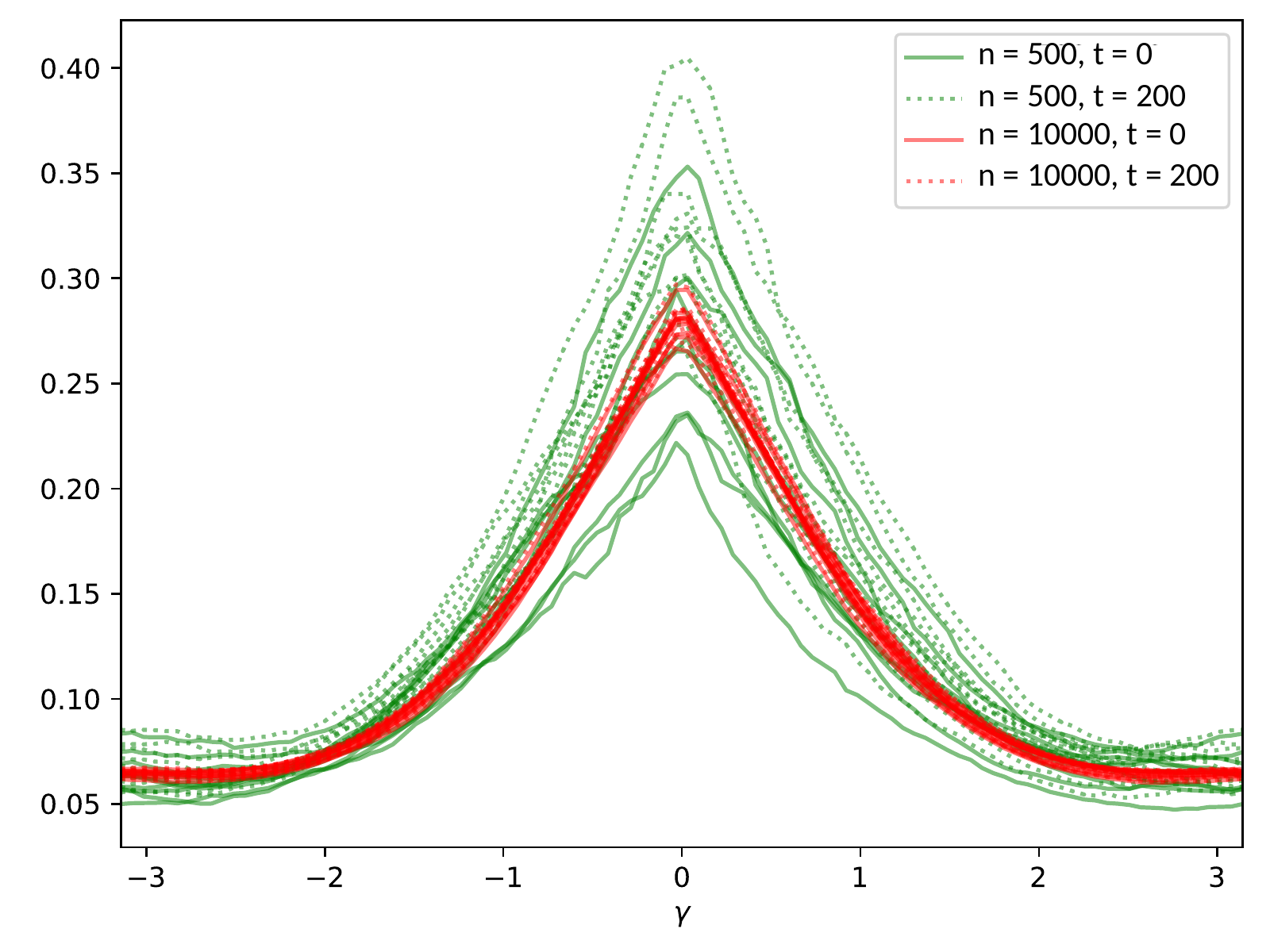}
        \caption{Convergence of the NTK to a fixed limit for two widths $n$ and two times $t$.}
        \label{fig:NTK_convergence}
    \end{minipage}\;
    \begin{minipage}[t]{0.49\textwidth}
        \centering
        \includegraphics[width=1.0\textwidth]{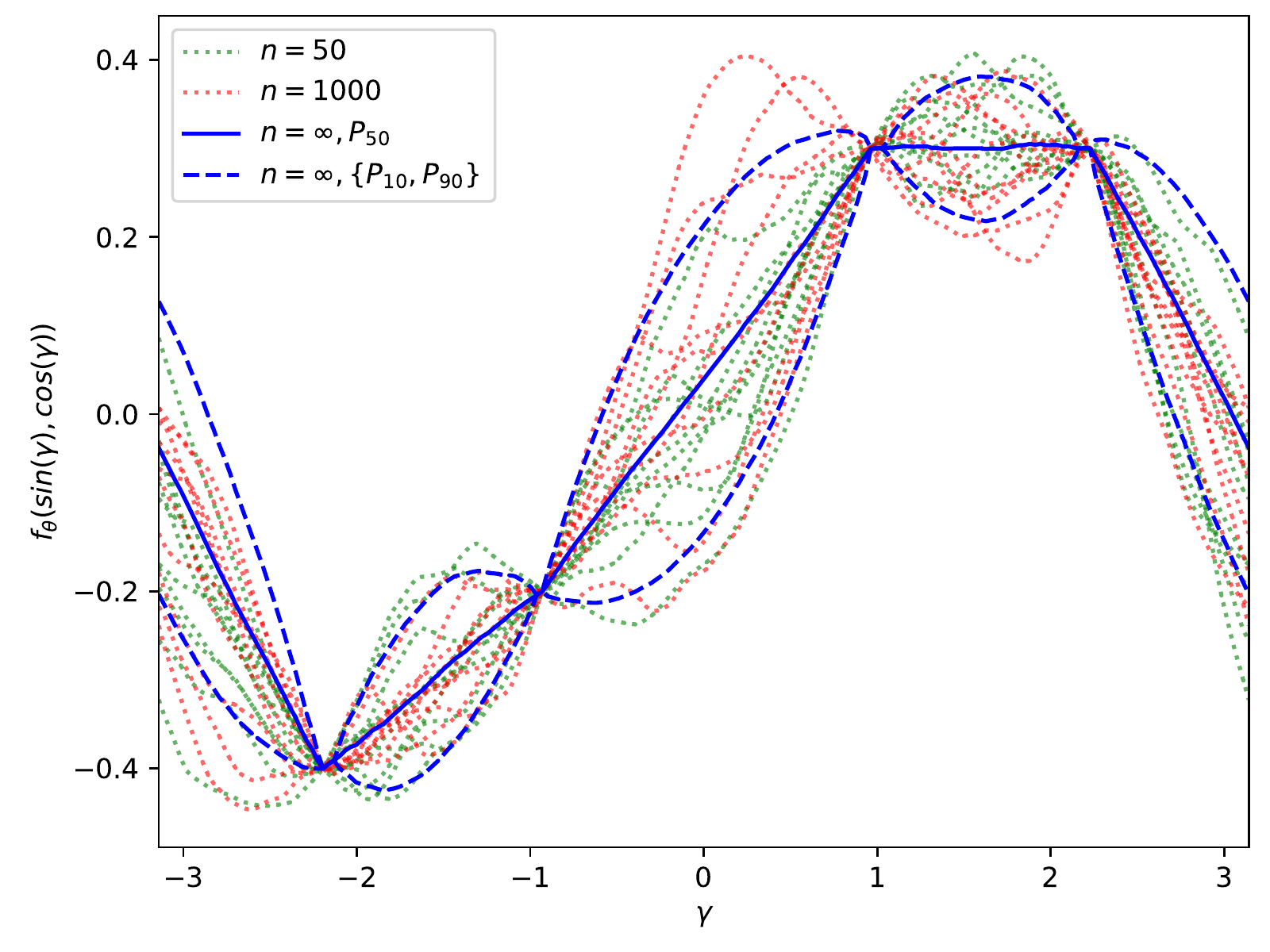}
        \caption{Networks function $f_\theta$ near convergence for two widths $n$ and 10th, 50th and 90th percentiles of the asymptotic Gaussian distribution.}
        \label{fig:ANN_regression}
    \end{minipage}
\end{figure*}

\subsection{Convergence of the NTK}
The first experiment illustrates the convergence of the NTK $\Theta^{(L)}$ of a network of depth $L=4$ for two different widths $n=500, 10000$. The function $\Theta^{(4)}(x_0, x)$ is plotted for a fixed $x_0=(1, 0)$ and $x=(cos(\gamma), sin(\gamma))$ on the unit circle in Figure \ref{fig:NTK_convergence}. To observe the distribution of the NTK, $10$ independent initializations are performed for both widths. The kernels are plotted at initialization $t=0$ and then after $200$ steps of gradient descent with learning rate $1.0$ (i.e. at $t=200$). We approximate the function $f^*(x) = x_1 x_2$ with a least-squares cost on random $\mathcal{N}(0, 1)$ inputs.

For the wider network, the NTK shows less variance and is smoother. It is interesting to note that the expectation of the NTK is very close for both networks widths. After $200$ steps of training, we observe that the NTK tends to ``inflate''. As expected, this effect is much less apparent for the wider network ($n=10000$) where the NTK stays almost fixed, than for the smaller network ($n=500$).

\subsection{Kernel regression}
For a regression cost, the infinite-width limit network function $f_{\theta(t)}$ has a Gaussian distribution for all times $t$ and in particular at convergence $t\to \infty $ (see Section \ref{sec:least-squares}). We compared the theoretical Gaussian distribution at $t\to\infty$ to the distribution of the network function $f_{\theta(T)}$ of a finite-width network for a large time $T=1000$. For two different widths $n=50, 1000$ and for $10$ random initializations each, a network is trained on a least-squares cost on $4$ points of the unit circle for $1000$ steps with learning rate $1.0$ and then plotted in Figure \ref{fig:ANN_regression}.

We also approximated the kernels $\Theta_\infty^{(4)}$ and $\Sigma^{(4)}$ using a large-width network ($n=10000$) and used them to calculate and plot the 10th, 50th and 90-th percentiles of the $t\to \infty$ limiting Gaussian distribution.

The distributions of the network functions are very similar for both widths: their mean and variance appear to be close to those of the limiting distribution $t \to \infty$. Even for relatively small widths ($n=50$), the NTK gives a good indication of the distribution of $f_{\theta(t)}$ as $t\to\infty$.

\subsection{Convergence along a principal component}
We now illustrate our result on the MNIST dataset of handwritten digits made up of grayscale images of dimension $28 \times 28$, yielding a dimension of $n_0 = 784$.

We computed the first 3 principal components of a batch of $N=512$ digits with respect to the NTK of a high-width network $n=10000$ (giving an approximation of the limiting kernel) using a power iteration method. The respective eigenvalues are $\lambda_1=0.0457$, $\lambda_2=0.00108$ and $\lambda_3=0.00078$. The kernel PCA is non-centered, the first component is therefore almost equal to the constant function, which explains the large gap between the first and second eigenvalues\footnote{It can be observed numerically, that if we choose $\beta=1.0$ instead of our recommended $0.1$, the gap between the first and the second principal component is about ten times bigger, which makes training more difficult.}. The next two components are much more interesting as can be seen in Figure \ref{fig:MNIST_kernel_principal_components}, where the batch is plotted with $x$ and $y$ coordinates corresponding to the 2nd and 3rd components.

\begin{figure*}
    \centering
    \begin{subfigure}[t]{0.285\textwidth}
        \includegraphics[width=1.05\textwidth]{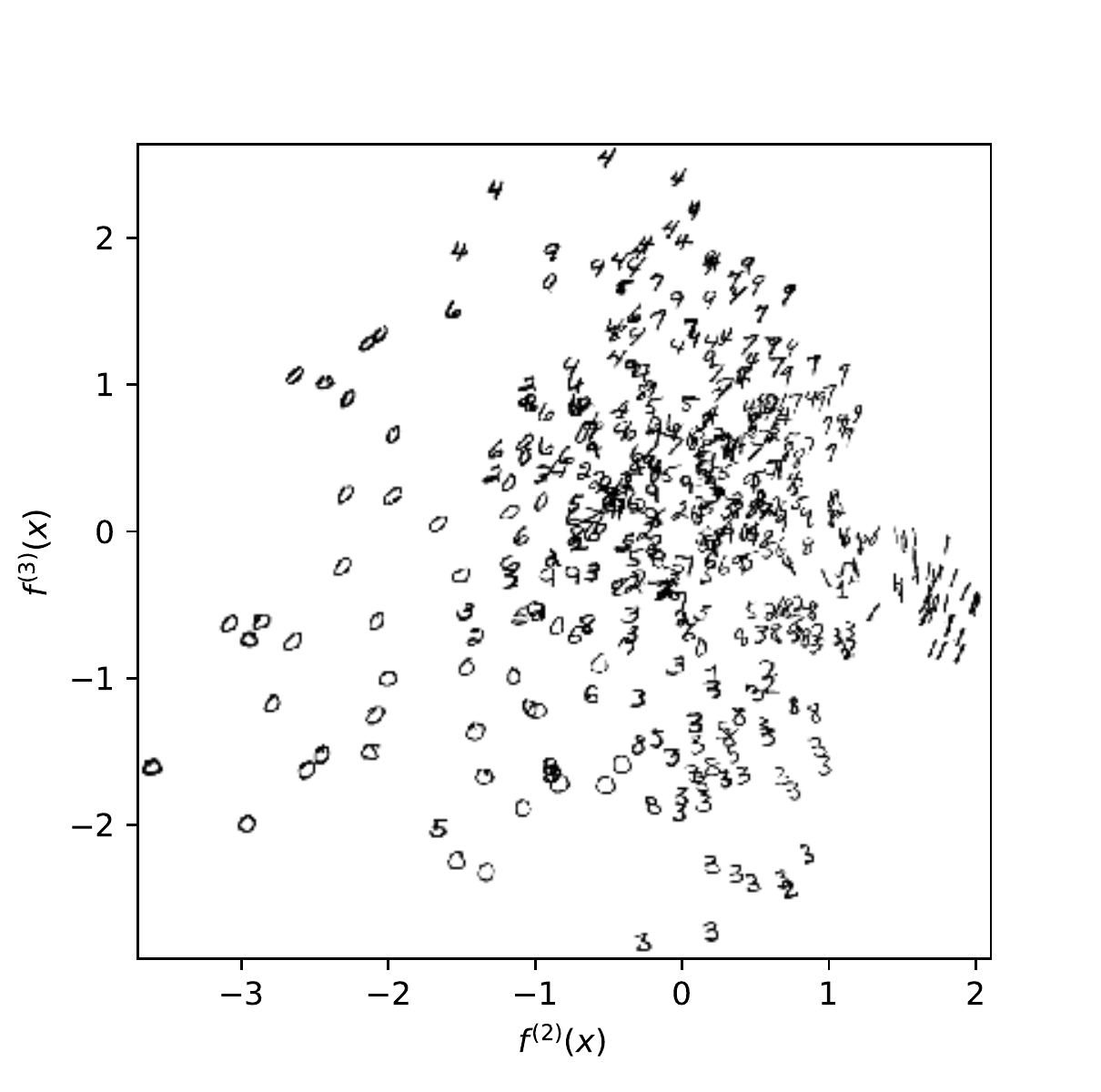}
        \caption{The 2nd and 3rd principal components of MNIST.}
        \label{fig:MNIST_kernel_principal_components}
    \end{subfigure}\;
    \begin{subfigure}[t]{0.34\textwidth}
  	 \begin{picture}(10,9)
   	  \put(5,3){\includegraphics[width=1.0\textwidth]{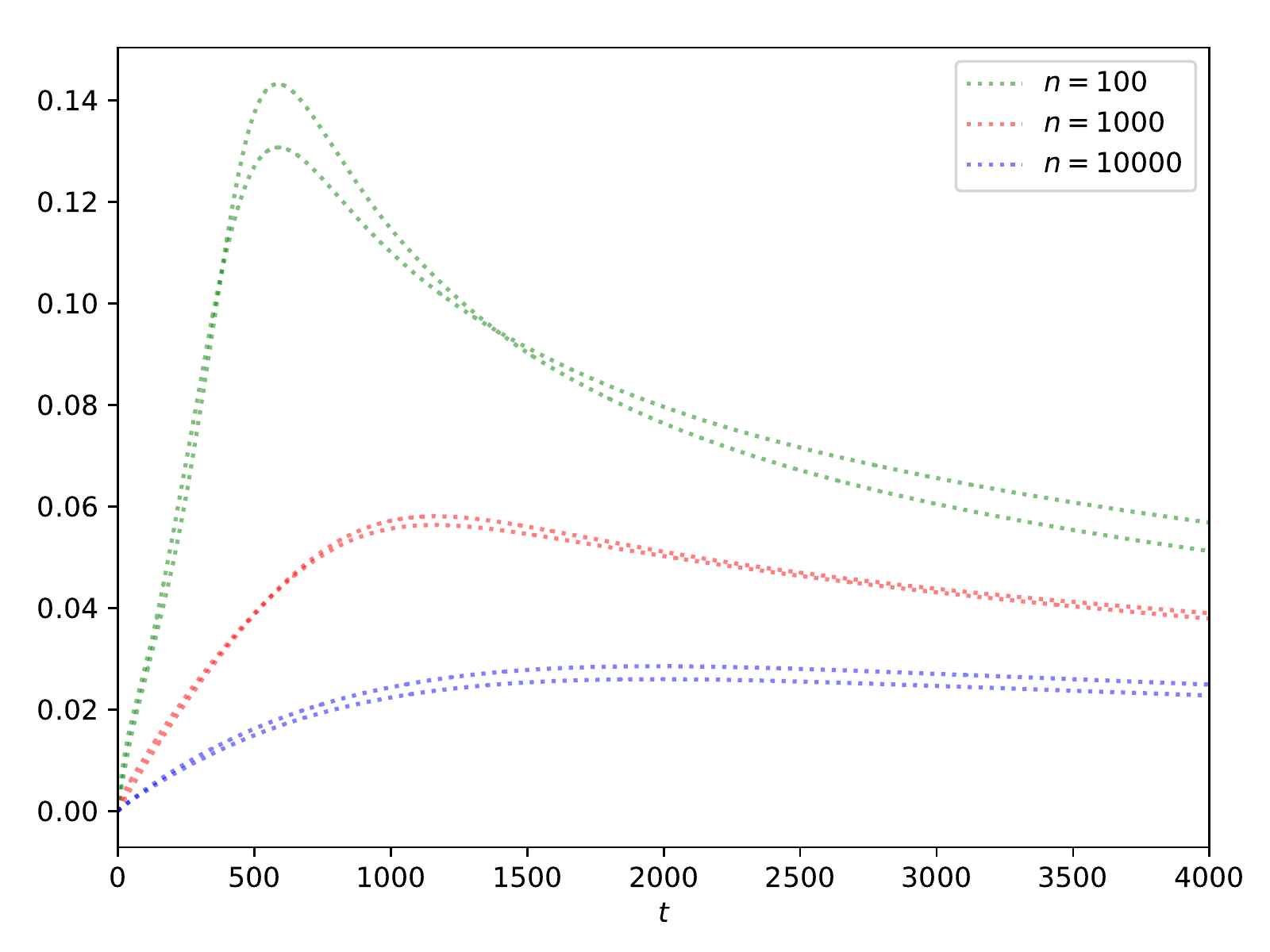}}
   	  \put(0,45){\scalebox{.6}{\rotatebox{90}{$||h_t||_{p^{in}}$ }}}
  	 \end{picture}
        \caption{Deviation of the network function $f_\theta$ from the straight line.}
        \label{fig:MNIST_convergence_outside}
    \end{subfigure}\;
	\begin{subfigure}[t]{0.34\textwidth}
  	 \begin{picture}(10,9)
   	  \put(5,3){\includegraphics[width=1.0\textwidth]{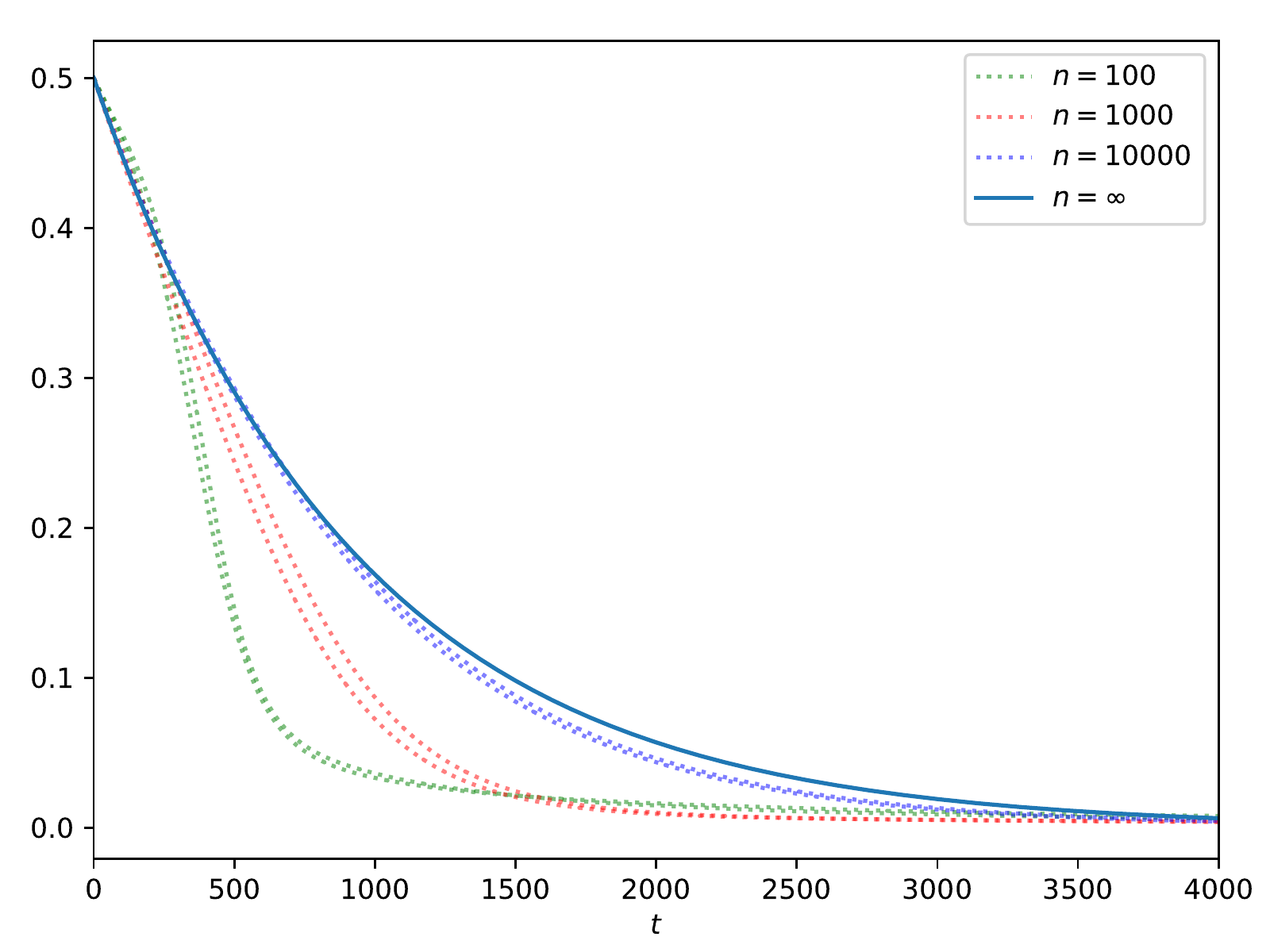}}
   	  \put(0,45){\scalebox{.6}{\rotatebox{90}{$||g_t||_{p^{in}}$ }}}
  	 \end{picture}
        \caption{Convergence of $f_\theta$ along the 2nd principal component.}
        \label{fig:MNIST_convergence_inside}
    \end{subfigure}
    \caption{}
\end{figure*}

We have seen in Section \ref{sec:least-squares} how the convergence of kernel gradient descent follows the kernel principal components. If the difference at initialization $f_0 - f^*$ is equal (or proportional) to one of the principal components $f^{(i)}$, then the function will converge along a straight line (in the function space) to $f^*$ at an exponential rate $e^{-\lambda_i t}$.

We tested whether ANNs of various widths $n=100, 1000, 10000$ behave in a similar manner. We set the goal of the regression cost to $f^* = f_{\theta(0)}+0.5 f^{(2)}$ and let the network converge. At each time step $t$, we decomposed the difference $f_{\theta(t)} - f^*$ into a component $g_t$ proportional to $f^{(2)}$ and another one $h_t$ orthogonal to $f^{(2)}$. In the infinite-width limit, the first component decays exponentially fast $||g_t||_{p^{in}} = 0.5 e^{-\lambda_2 t}$ while the second is null ($h_t=0$), as the function converges along a straight line.

As expected, we see in Figure \ref{fig:MNIST_convergence_outside} that the wider the network, the less it deviates from the straight line (for each width $n$ we performed two independent trials). As the width grows, the trajectory along the 2nd principal component (shown in Figure \ref{fig:MNIST_convergence_inside}) converges to the theoretical limit shown in blue.

A surprising observation is that smaller networks appear to converge faster than wider ones. This may be explained by the inflation of the NTK observed in our first experiment. Indeed, multiplying the NTK by a factor $a$ is equivalent to multiplying the learning rate by the same factor. However, note that since the NTK of large-width network is more stable during training, larger learning rates can in principle be taken. One must hence be careful when comparing the convergence speed in terms of the number of steps (rather than in terms of the time $t$): both the inflation effect and the learning rate must be taken into account.

\section{Conclusion}
This paper introduces a new tool to study ANNs, the Neural Tangent Kernel (NTK), which describes the local dynamics of an ANN during gradient descent. This leads to a new connection between ANN training and kernel methods: in the infinite-width limit, an ANN can be described in the function space directly by the limit of the NTK, an explicit constant kernel $\Theta^{(L)}_\infty$, which only depends on its depth, nonlinearity and parameter initialization variance. More precisely, in this limit, ANN gradient descent is shown to be equivalent to a kernel gradient descent with respect to $\Theta^{(L)}_\infty$. The limit of the NTK is hence a powerful tool to understand the generalization properties of ANNs, and it allows one to study the influence of the depth and nonlinearity on the learning abilities of the network. The analysis of training using NTK allows one to relate convergence of ANN training with the positive-definiteness of the limiting NTK and leads to a characterization of the directions favored by early stopping methods.

\section*{Acknowledgements}
The authors thank K. Kyt\"ol\"a for many interesting discussions. The second author was supported by the ERC CG CRITICAL. The last author acknowledges support from the ERC SG Constamis, the NCCR SwissMAP, the Blavatnik Family Foundation and the Latsis Foundation.

\bibliography{main_NIPS}
\bibliographystyle{abbrv}

\appendix
\setcounter{prop}{0}
\setcounter{thm}{0}
\section{Appendix} \label{sec:proofs}
This appendix is dedicated to proving the key results of this paper, namely Proposition \ref{prop:output_limit} and Theorems \ref{thm:convergence_kernel_initialization} and \ref{thm:conv-ntk-training}, which describe the asymptotics of neural networks at initialization and during training. 

We study the limit of the NTK as $n_1, ..., n_{L-1} \to \infty$ sequentially, i.e. we first take $n_1 \to \infty$, then $n_2 \to \infty $, etc. This leads to much simpler proofs, but our results could in principle be strengthened to the more general setting when $\min(n_1, ..., n_{L-1}) \to \infty$.

A natural choice of convergence to study the NTK is with respect to the operator norm on kernels:
$$
\lVert K \rVert_{op} = \max_{\lVert f \rVert_{p^{in}} \leq 1} \lVert f \rVert_K = \max_{\lVert f \rVert_{p^{in}} \leq 1}\sqrt {\mathbb{E}_{x, x'}[f(x)^T K(x, x')f(x')]},
$$
where the expectation is taken over two independent $x, x' \sim p^{in}$. This norm depends on the input distribution $p^{in}$. In our setting, $p^{in}$ is taken to be the empirical measure of a finite dataset of distinct samples $x_1, ..., x_N$. As a result, the operator norm of $K$ is equal to the leading eigenvalue of the $N n_L \times N n_L$ Gram matrix $\left( K_{kk'}(x_i, x_j)\right)_{k, k' < n_L, i, j < N}$. In our setting, convergence in operator norm is hence equivalent to pointwise convergence of $ K $ on the dataset.

\subsection{Asymptotics at Initialization}
It has already been observed \citep{Neal1996, Lee2017} that the output functions $f_{\theta, i}$ for $i=1, ..., n_L$ tend to iid Gaussian processes in the infinite-width limit. 

\begin{prop}\label{prop:output_limit}
For a network of depth $L$ at initialization, with a Lipschitz nonlinearity $\sigma$, and in the limit as $n_1, ..., n_{L-1} \to \infty$ sequentially, the output functions $f_{\theta, k}$, for $k=1, ..., n_L$, tend  (in law) to iid centered Gaussian processes of covariance $\Sigma^{(L)}$, where $\Sigma^{(L)}$ is defined recursively by:
\begin{align*}
\Sigma^{(1)}(x, x') &= \frac{1}{n_0} x^T x' + \beta^2 \\
\Sigma^{(L+1)}(x, x') &= \mathbb{E}_{f}[\sigma(f(x)) \sigma(f(x'))] + \beta^2,
\end{align*}
taking the expectation with respect to a centered Gaussian process $f$ of covariance $\Sigma^{(L)}$.
\end{prop}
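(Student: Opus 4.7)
The plan is to argue by induction on the depth $L$, exploiting the fact that we take $n_1, \ldots, n_{L-1} \to \infty$ sequentially. For the base case $L=1$, the preactivation $\tilde{\alpha}^{(1)}_k(x) = \frac{1}{\sqrt{n_0}}\sum_j W^{(0)}_{jk} x_j + \beta b^{(0)}_k$ is an explicit linear combination of iid $\mathcal{N}(0,1)$ parameters, hence (for any finite collection of inputs $x_1, \ldots, x_m$) an exact centered Gaussian vector. The covariance is computed directly as $\frac{1}{n_0} x^T x' + \beta^2 = \Sigma^{(1)}(x,x')$, and the different output coordinates $k$ are independent because they draw on disjoint sets of parameters (the rows $W^{(0)}_{\cdot,k}$ and the bias $b^{(0)}_k$). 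No passage to the limit is even required at this stage.

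For the inductive step, suppose after sending $n_1, \ldots, n_{L-1} \to \infty$ the preactivations $\tilde{\alpha}^{(L)}_j$ are iid centered Gaussian processes with covariance $\Sigma^{(L)}$. Writing
\[
\tilde{\alpha}^{(L+1)}_k(x) = \frac{1}{\sqrt{n_L}} \sum_{j=1}^{n_L} W^{(L)}_{jk}\, \sigma\bigl(\tilde{\alpha}^{(L)}_j(x)\bigr) + \beta\, b^{(L)}_k,
\]
the crucial observation is that, conditionally on the previous-layer activations, $\tilde{\alpha}^{(L+1)}_k$ is a linear combination of the independent Gaussians $W^{(L)}_{jk}, b^{(L)}_k$, hence exactly Gaussian. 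Its conditional covariance at any pair $(x,x')$ equals
\[
\hat\Sigma^{(L+1)}(x,x') := \frac{1}{n_L}\sum_{j=1}^{n_L} \sigma\bigl(\tilde{\alpha}^{(L)}_j(x)\bigr) \sigma\bigl(\tilde{\alpha}^{(L)}_j(x')\bigr) + \beta^2.
\]
By the inductive hypothesis, the summands $\sigma(\tilde{\alpha}^{(L)}_j(x)) \sigma(\tilde{\alpha}^{(L)}_j(x'))$ are iid in $j$; the Lipschitz hypothesis on $\sigma$ gives the linear bound $|\sigma(y)| \leq |\sigma(0)| + L|y|$, so these random variables have finite variance (all Gaussian moments are finite). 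The strong law of large numbers then yields $\hat\Sigma^{(L+1)}(x,x') \to \Sigma^{(L+1)}(x,x')$ almost surely as $n_L \to \infty$.

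To upgrade conditional Gaussianity plus LLN to unconditional Gaussianity, I would work with characteristic functions on a fixed finite set of inputs $x_1, \ldots, x_m$. Conditionally on the previous layer, the vector $(\tilde{\alpha}^{(L+1)}_k(x_i))_{i \leq m}$ is centered Gaussian with covariance matrix $\hat\Sigma^{(L+1)}_{ij} := \hat\Sigma^{(L+1)}(x_i,x_j)$, so its characteristic function at $\xi \in \mathbb{R}^m$ equals $\exp(-\tfrac{1}{2} \xi^T \hat\Sigma^{(L+1)}\xi)$. Taking unconditional expectation and applying dominated convergence (the integrand is bounded by $1$) together with the pointwise LLN gives convergence to $\exp(-\tfrac{1}{2} \xi^T \Sigma^{(L+1)}\xi)$, the characteristic function of the claimed limiting Gaussian. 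Joint convergence across output coordinates $k$ and independence in the limit follow from the same argument applied to the stacked vector, using that the rows $(W^{(L)}_{\cdot,k}, b^{(L)}_k)$ are independent across $k$, so the conditional distribution factorizes exactly, and each factor has the same limiting covariance.

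The main obstacle is less the structural argument than making sure the LLN step genuinely applies after the sequential limit has already been taken. Specifically, we must be careful that the random variables $\sigma(\tilde{\alpha}^{(L)}_j(x))\sigma(\tilde{\alpha}^{(L)}_j(x'))$ truly are iid in $j$ after the previous widths have been sent to infinity, rather than merely asymptotically independent; this is exactly what the sequential-limit convention buys us, and it is why the authors emphasize this order of limits in the appendix. Once this is granted, the Lipschitz bound on $\sigma$ and Gaussian moment control handle integrability, and the characteristic function argument closes the induction cleanly.
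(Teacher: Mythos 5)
Your proposal is correct and follows essentially the same route as the paper's own proof: induction on depth, exact Gaussianity of the affine base case, conditional Gaussianity of the new layer given the previous activations, and the law of large numbers to show the conditional covariance becomes deterministic, which removes the conditioning in the limit. The characteristic-function and dominated-convergence step you add is just a rigorous spelling-out of the paper's remark that a deterministic limiting covariance makes the conditioned and unconditioned distributions agree.
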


\begin{proof}We prove the result by induction. When $L=1$, there are no hidden layers and $f_\theta$ is a random affine function of the form:
$$
 f_\theta(x) = \frac{1}{\sqrt{n_0}} W^{(0)} x + \beta b^{(0)}.
$$
All output functions $f_{\theta, k}$ are hence independent and have covariance $\Sigma^{(1)}$ as needed.

The key to the induction step is to consider an $(L+1)$-network as the following composition: an $L$-network $\mathbb{R}^{n_0} \to \mathbb{R}^{n_L}$ mapping the input to the pre-activations $\tilde{\alpha}^{(L)}_i$, followed by an elementwise application of the nonlinearity $\sigma$ and then a random affine map $\mathbb{R}^{n_L} \to \mathbb{R}^{n_{L+1}}$. The induction hypothesis gives that in the limit as sequentially $n_1, ..., n_{L-1} \to \infty$ the preactivations $\tilde{\alpha}^{(L)}_i$ tend to iid Gaussian processes with covariance $\Sigma^{(L)}$. The outputs
$$
f_{\theta, i} = \frac{1}{\sqrt{n_L}} W_i^{(L)} \alpha^{(L)} + \beta b_i^{(L)}
$$
conditioned on the values of $\alpha^{(L)}$ are iid centered Gaussians with covariance
$$
 \tilde{\Sigma}^{(L+1)}(x, x') = \frac{1}{n_L} \alpha^{(L)}(x;\theta)^T \alpha^{(L)}(x';\theta) + \beta^2.
$$
By the law of large numbers, as $n_L \to \infty$, this covariance tends in probability to the expectation
$$\tilde{\Sigma}^{(L+1)}(x, x') \to \Sigma^{(L+1)}(x, x') = \mathbb{E}_{f \sim \mathcal{N}(0, \Sigma^{(L)})}[\sigma(f(x)) \sigma(f(x'))] + \beta^2.$$
In particular the covariance is deterministic and hence independent of $\alpha^{(L)}$. As a consequence, the conditioned and unconditioned distributions of $f_{\theta, i}$ are equal in the limit: they are iid centered Gaussian of covariance $\Sigma^{(L+1)}$.
\end{proof}

In the infinite-width limit, the neural tangent kernel, which is random at initialization, converges in probability to a deterministic limit.
\begin{thm}\label{thm:convergence_kernel_initialization}
For a network of depth $L$ at initialization, with a Lipschitz nonlinearity $\sigma$, and in the limit as the layers width $n_1, ..., n_{L-1} \to \infty$ sequentially, the NTK $\Theta^{(L)}$ converges in probability to a deterministic limiting kernel: $$\Theta^{(L)} \to \Theta^{(L)}_\infty \otimes Id_{n_L}.$$
The scalar kernel $\Theta^{(L)}_\infty : \mathbb{R}^{n_0} \times \mathbb{R}^{n_0} \to \mathbb{R}$ is defined recursively by
\begin{align*}
    \Theta^{(1)}_\infty(x, x') &= \Sigma^{(1)}(x, x') \\
    \Theta^{(L+1)}_\infty(x, x') &=  \Theta^{(L)}_\infty(x, x') \dot{\Sigma}^{(L+1)}(x, x')  + \Sigma^{(L+1)}(x, x'),
\end{align*}
where
\[
	\dot{\Sigma}^{(L+1)}\left(x,x'\right)=
	\mathbb{E}_{f\sim\mathcal{N}\left(0,\Sigma^{\left(L\right)}\right)}\left[\dot{\sigma}\left(f\left(x\right)\right)\dot{\sigma}\left(f\left(x'\right)\right)\right],
\]
taking the expectation with respect to a centered Gaussian process $f$ of covariance $\Sigma^{(L)}$, and where $\dot{\sigma}$ denotes the derivative of $\sigma$.
\end{thm}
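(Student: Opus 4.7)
The plan is to prove Theorem \ref{thm:convergence_kernel_initialization} by induction on the depth $L$, by splitting the NTK of an $(L+1)$-network into contributions from the last-layer parameters and from the lower-layer parameters, and applying Proposition \ref{prop:output_limit} together with the induction hypothesis on each piece.

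\textbf{Base case ($L=1$).} There are no hidden layers, so $f_\theta(x) = \frac{1}{\sqrt{n_0}} W^{(0)} x + \beta b^{(0)}$. The derivatives $\partial_{W_{ij}^{(0)}} f_{\theta,k}$ and $\partial_{b_i^{(0)}} f_{\theta,k}$ are deterministic, and a direct computation yields $\Theta^{(1)}_{kk'}(x,x') = \delta_{kk'}\bigl(\frac{1}{n_0} x^T x' + \beta^2\bigr) = \delta_{kk'} \Sigma^{(1)}(x,x')$.

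\textbf{Inductive step.} Assume the result for depth $L$, i.e.\ $\Theta^{(L)}(x,x') \to \Theta^{(L)}_\infty(x,x')\otimes \mathrm{Id}_{n_L}$ in probability as $n_1,\dots,n_{L-1}\to \infty$ sequentially, and split the parameters of the $(L+1)$-network into $\tilde\theta$ (all parameters of the first $L$ layers) and the last-layer weights $W^{(L)}, b^{(L)}$. Correspondingly write $\Theta^{(L+1)} = \Theta^{(L+1)}_{\mathrm{low}} + \Theta^{(L+1)}_{\mathrm{last}}$.

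For the \emph{last-layer} part, using $\partial_{W^{(L)}_{jk}} f_{\theta,k'} = \frac{1}{\sqrt{n_L}}\delta_{kk'} \alpha^{(L)}_j(x)$ and $\partial_{b^{(L)}_k} f_{\theta,k'} = \beta\delta_{kk'}$, one gets
\[
\Theta^{(L+1)}_{\mathrm{last},\,kk'}(x,x') \;=\; \delta_{kk'}\Bigl(\frac{1}{n_L}\sum_{j=1}^{n_L}\alpha^{(L)}_j(x)\alpha^{(L)}_j(x') + \beta^2\Bigr).
\]
By Proposition \ref{prop:output_limit}, after taking $n_1,\dots,n_{L-1}\to\infty$ the preactivations $\tilde\alpha^{(L)}_j$ are iid centered Gaussian processes with covariance $\Sigma^{(L)}$; the law of large numbers as $n_L\to\infty$ then gives convergence in probability to $\delta_{kk'}\Sigma^{(L+1)}(x,x')$, which is the second summand of the claimed recursion.

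For the \emph{lower-layer} part, chain rule gives
\[
\partial_{\tilde\theta_p} f_{\theta,k}(x) \;=\; \frac{1}{\sqrt{n_L}}\sum_{j=1}^{n_L} W^{(L)}_{jk}\,\dot\sigma\bigl(\tilde\alpha^{(L)}_j(x)\bigr)\,\partial_{\tilde\theta_p} \tilde\alpha^{(L)}_j(x),
\]
so that
\[
\Theta^{(L+1)}_{\mathrm{low},\,kk'}(x,x') \;=\; \frac{1}{n_L}\sum_{j,j'} W^{(L)}_{jk} W^{(L)}_{j'k'}\,\dot\sigma(\tilde\alpha^{(L)}_j(x))\,\dot\sigma(\tilde\alpha^{(L)}_{j'}(x'))\,\Theta^{(L)}_{jj'}(x,x'),
\]
where $\Theta^{(L)}_{jj'}$ is the NTK of the depth-$L$ subnetwork with $n_L$ outputs. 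By the induction hypothesis, $\Theta^{(L)}_{jj'}(x,x') \to \delta_{jj'}\Theta^{(L)}_\infty(x,x')$; the off-diagonal $j\neq j'$ terms are thus negligible. After this reduction, one is left with
\[
\frac{\Theta^{(L)}_\infty(x,x')}{n_L}\sum_{j=1}^{n_L} W^{(L)}_{jk} W^{(L)}_{jk'}\,\dot\sigma(\tilde\alpha^{(L)}_j(x))\,\dot\sigma(\tilde\alpha^{(L)}_j(x')).
\]
Since $W^{(L)}$ is independent of $\tilde\alpha^{(L)}$, taking the expectation over $W^{(L)}_{jk} W^{(L)}_{jk'}$ yields $\delta_{kk'}$, and a standard concentration / LLN argument for $n_L \to \infty$ (using that $\dot\sigma$ is bounded since $\sigma$ is Lipschitz, and that the $\tilde\alpha^{(L)}_j$ are asymptotically iid Gaussian with covariance $\Sigma^{(L)}$ by Proposition \ref{prop:output_limit}) gives the limit $\delta_{kk'} \dot\Sigma^{(L+1)}(x,x')$. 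Adding the two contributions yields exactly the recursion for $\Theta^{(L+1)}_\infty$.

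\textbf{Main obstacle.} The delicate point is the lower-layer contribution: one needs to justify replacing $\Theta^{(L)}_{jj'}$ by $\delta_{jj'}\Theta^{(L)}_\infty$ inside a sum indexed by $j,j'$ that itself grows with $n_L$, and simultaneously invoke LLN both across the $n_L$ hidden neurons and across the random weights $W^{(L)}_{jk}$. The sequential-limit convention (first $n_1 \to \infty$, then $n_2$, and so on up to $n_L$) is what makes this clean: at the stage we process $n_L$, the preactivations $\tilde\alpha^{(L)}_j$ are already iid Gaussian, $\Theta^{(L)}_{jj'}$ is already a deterministic scalar times $\delta_{jj'}$, and $W^{(L)}$ is fresh randomness independent of everything produced by the lower layers, so the diagonalization and the LLN apply cleanly. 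Controlling the off-diagonal remainder rigorously (bounding its variance using boundedness of $\dot\sigma$ and the $\beta^2$ shift that keeps $\Sigma^{(L)}$ positive definite on any finite dataset) is the main calculation, but it is routine given these independence structures.
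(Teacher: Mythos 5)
Your proposal is correct and follows essentially the same route as the paper's own proof: induction on depth, splitting the NTK of the $(L+1)$-network into last-layer and lower-layer contributions, applying Proposition \ref{prop:output_limit} and the induction hypothesis to diagonalize $\Theta^{(L)}_{jj'}$, and then the law of large numbers over the $n_L$ hidden neurons. The delicate point you flag (interchanging the sequential limits with the growing double sum over $j,j'$) is exactly the one the paper's sequential-limit convention is designed to handle, and your treatment of it matches the paper's.
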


\begin{proof}The proof is again by induction. When $L=1$, there is no hidden layer and therefore no limit to be taken. The neural tangent kernel is a sum over the entries of $W^{(0)}$ and those of $b^{(0)}$:
\begin{align*}
 \Theta_{kk'}(x, x') &= \frac{1}{n_0} \sum_{i=1}^{n_0} \sum_{j=1}^{n_1} x_i x'_i \delta_{jk}\delta_{jk'}  + \beta^2 \sum_{j=1}^{n_1} \delta_{jk}\delta_{jk'} \\
  &= \frac{1}{n_0} x^T x' \delta_{kk'} + \beta^2 \delta_{kk'} = \Sigma^{(1)}(x, x') \delta_{kk'}.
\end{align*}
 
Here again, the key to prove the induction step is the observation that a network of depth $L+1$ is an $L$-network mapping the inputs $x$ to the preactivations of the $L$-th layer $\tilde{\alpha}^{(L)}(x)$ followed by a nonlinearity and a random affine function. For a network of depth $ L + 1 $, let us therefore split the parameters into the parameters $\tilde{\theta}$ of the first $L$ layers and those of the last layer $(W^{(L)}, b^{(L)})$. 
 
By Proposition \ref{prop:output_limit} and the induction hypothesis, as $n_1, ..., n_{L-1} \to \infty$ the pre-activations $\tilde{\alpha}^{(L)}_i$ are iid centered  Gaussian with covariance $\Sigma^{(L)}$ and the neural tangent kernel $\Theta^{(L)}_{ii'}(x, x')$ of the smaller network converges to a deterministic limit:
$$
  \left(\partial_{\tilde{\theta}} \tilde{\alpha}^{(L)}_{i}(x;\theta)\right)^T \partial_{\tilde{\theta}} \tilde{\alpha}^{(L)}_{i'}(x';\theta) \to \Theta^{(L)}_\infty(x, x') \delta_{ii'}.
$$

We can split the neural tangent network into a sum over the parameters $\tilde{\theta}$ of the first $L$ layers and the remaining parameters $W^{(L)}$ and $b^{(L)}$. 

For the first sum let us observe that by the chain rule:
$$
\partial_{\tilde{\theta}_p} f_{\theta, k}(x) = \frac{1}{\sqrt{n_L}} \sum_{i=1}^{n_L} \partial_{\tilde{\theta}_p} \tilde{\alpha}^{(L)}_{i}(x;\theta) \dot{\sigma}(\tilde{\alpha}^{(L)}_i(x;\theta)) W^{(L)}_{ik}.
$$
By the induction hypothesis, the contribution of the parameters $\tilde{\theta}$ to the neural tangent kernel $\Theta^{(L+1)}_{kk'}(x, x')$ therefore converges as $n_1, ..., n_{L-1} \to \infty$:
\begin{align*}
    &\frac{1}{n_L}\! \sum_{i, i'=1}^{n_L}\! \Theta^{(L)}_{ii'}(x, x') \dot{\sigma}\!\left(\!\tilde{\alpha}^{(L)}_i(x;\theta)\!\right) \dot{\sigma}\!\left(\!\tilde{\alpha}^{(L)}_{i'}(x';\theta)\!\right) W^{(L)}_{ik}W^{(L)}_{i'k'} \\
    &\!\to \frac{1}{n_L}\!\sum_{i=1}^{n_L}\! \Theta^{(L)}_\infty(x, x') \dot{\sigma}\!\left(\!\tilde{\alpha}^{(L)}_i(x;\theta)\!\right) \dot{\sigma}\!\left(\!\tilde{\alpha}^{(L)}_i(x';\theta)\!\right) W^{(L)}_{ik}W^{(L)}_{ik'}
\end{align*}
By the law of large numbers, as $n_L \to \infty$, this tends to its expectation which is equal to
$$
\Theta^{(L)}_\infty(x, x') \dot{\Sigma}^{(L+1)}(x, x') \delta_{kk'}.
$$
It is then easy to see that the second part of the neural tangent kernel, the sum over $W^{(L)}$ and $b^{(L)}$ converges to $\Sigma^{(L+1)} \delta_{kk'}$ as $n_1, ..., n_L \to \infty$.
\end{proof}

\subsection{Asymptotics during Training}
Given a training direction $t \mapsto d_t \in \mathcal{F}$, a neural network is trained in the following manner: the parameters $\theta_p$ are initialized as iid $\mathcal{N}(0, 1)$ and follow the differential equation:
$$
\partial_t \theta_p(t) = \left< \partial_{\theta_p} F^{(L)}, d_t \right>_{p^{in}}.
$$
In this context, in the infinite-width limit, the NTK stays constant during training:

\begin{thm}\label{thm:conv-ntk-training}
Assume that $ \sigma $ is a Lipschitz, twice differentiable nonlinearity function, with bounded second derivative. For any $T$ such that the integral $\int_0^T \| d_t \|_{p^{in}} dt$ stays stochastically bounded, as  $n_1, ..., n_{L-1} \to \infty$ sequentially, we have, uniformly for $t\in[0, T]$,
$$
\Theta^{(L)}(t) \to \Theta^{(L)}_\infty \otimes Id_{n_L}.
$$
As a consequence, in this limit, the dynamics of $f_\theta$ is described by the differential equation
\begin{align*}
    \partial_t f_{\theta(t)} = \Phi_{\Theta^{(L)}_\infty \otimes Id_{n_L}} \left( \left<d_{t}, \cdot \right>_{p^{in}} \right).
\end{align*}
\end{thm}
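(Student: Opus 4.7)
The plan is to proceed by induction on the depth $L$, mirroring the layerwise argument used for Theorem \ref{thm:convergence_kernel_initialization}. The base case $L=1$ is immediate: $F^{(1)}$ is affine in $\theta$, so $\Theta^{(1)} = \Sigma^{(1)}\otimes Id_{n_1}$ is a deterministic constant kernel that does not depend on $\theta$ at all, and the stated differential equation for $f_\theta$ is then just the definition of the kernel gradient flow with training direction $d_t$. For the inductive step I would split the depth-$(L+1)$ parameters into the inner parameters $\tilde\theta$ of the first $L$ layers and the last-layer parameters $(W^{(L)}, b^{(L)})$, then take the sequential limit $n_1,\dots,n_{L-1}\to\infty$ first (so that the induction hypothesis applies to the subnetwork computing the preactivations $\tilde\alpha^{(L)}_i$), and only then take $n_L\to\infty$.

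The core of the induction rests on two quantitative claims, both asserting drifts of order $O(1/\sqrt{n_L})$ over $[0,T]$. First, each outer weight $W^{(L)}_{ik}$ and bias $b^{(L)}_k$ moves by at most $O(1/\sqrt{n_L})$, because $\partial_t W^{(L)}_{ik} = \frac{1}{\sqrt{n_L}}\langle \sigma(\tilde\alpha^{(L)}_i), d_{t,k}\rangle_{p^{in}}$ (with an analogous formula for the bias) and $\|\sigma(\tilde\alpha^{(L)}_i)\|_{p^{in}}$ stays bounded by Proposition \ref{prop:output_limit} applied to the subnetwork. Second, each individual preactivation $\tilde\alpha^{(L)}_i(\cdot; \tilde\theta(t))$ drifts by $O(1/\sqrt{n_L})$ in $\|\cdot\|_{p^{in}}$, uniformly in $i$ and $t$. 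This second claim is where the induction hypothesis enters: plugging the chain-rule formula for $\partial_{\tilde\theta_p} f_{\theta,k}$ into $\partial_t\tilde\theta_p$ makes the subnetwork NTK $\Theta^{(L)}_{ii'}$ appear, and the subnetwork is effectively trained by the direction $\tilde d^{(i)}_t = \frac{1}{\sqrt{n_L}}\dot\sigma(\tilde\alpha^{(L)}_i)\sum_k W^{(L)}_{ik}(t)\,d_{t,k}$, whose time integral is stochastically bounded since each $W^{(L)}_{ik}$ is $\mathcal{N}(0,1)$ at initialization and $\dot\sigma$ is bounded. Applied to this effective direction, the inductive step gives $\Theta^{(L)}_{ii'}(\tilde\theta(t))\to\Theta^{(L)}_\infty\delta_{ii'}$, and substituting back yields the $O(1/\sqrt{n_L})$ bound on $\partial_t\tilde\alpha^{(L)}_i$.

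Given these two bounds, I would show that each summand in the recursion from Theorem \ref{thm:convergence_kernel_initialization} is asymptotically stable in time. For the outer-parameter contribution $\frac{1}{n_L}\sum_i \sigma(\tilde\alpha^{(L)}_i(t))\sigma(\tilde\alpha^{(L)}_i(t))^T\delta_{kk'} + \beta^2\delta_{kk'}$, Lipschitzness of $\sigma$ together with the preactivation drift bound gives a pointwise $O(1/\sqrt{n_L})$ bound on its time derivative, so it stays asymptotically equal to $\Sigma^{(L+1)}(x,x')\delta_{kk'}$. For the inner-parameter contribution, I would combine the explicit form $\frac{1}{n_L}\sum_{i,i'} W^{(L)}_{ik}(t) W^{(L)}_{i'k'}(t)\dot\sigma(\tilde\alpha^{(L)}_i(t))\dot\sigma(\tilde\alpha^{(L)}_{i'}(t))\,\Theta^{(L)}_{ii'}(\tilde\theta(t))$ with the induction hypothesis and the assumption that $\ddot\sigma$ is bounded, the latter being what turns the drift in $\tilde\alpha^{(L)}_i$ into a drift of the same order in $\dot\sigma(\tilde\alpha^{(L)}_i)$. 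Summing the two pieces gives $\Theta^{(L+1)}(\theta(t))\to\Theta^{(L+1)}_\infty\otimes Id_{n_{L+1}}$ uniformly in $t\in[0,T]$, and the stated differential equation for $f_{\theta(t)}$ is then immediate from $\partial_t f_{\theta(t)} = \Phi_{\Theta^{(L+1)}(\theta(t))}(\langle d_t,\cdot\rangle_{p^{in}})$ together with continuity of $\Phi$ in its kernel argument.

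The main obstacle I anticipate is the self-referential nature of the preactivation bound: controlling $\partial_t\tilde\alpha^{(L)}_i$ requires that the subnetwork NTK stay close to its limit along the entire trajectory, while the induction hypothesis in turn needs the effective subnetwork training direction to be stochastically bounded along that trajectory. I would resolve this by a stopping-time argument: let $\tau_\epsilon$ be the first time $t\le T$ at which either the subnetwork NTK deviates from $\Theta^{(L)}_\infty\otimes Id_{n_L}$ by more than $\epsilon$ or $\max_{i,k}|W^{(L)}_{ik}(t)-W^{(L)}_{ik}(0)|>\epsilon$, verify that the drift bounds above are valid up to $\tau_\epsilon\wedge T$, and conclude that $\tau_\epsilon > T$ with probability tending to $1$ as the widths tend to infinity. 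This is also the point where the twice-differentiability of $\sigma$ is essential, since a first-order Taylor expansion of $\dot\sigma$ along the perturbed preactivation is what makes the NTK drift genuinely $o(1)$ as $n_L\to\infty$, rather than merely bounded.
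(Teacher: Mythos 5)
Your proposal follows essentially the same route as the paper's proof: induction on depth, splitting the depth-$(L+1)$ network into a depth-$L$ subnetwork plus the last affine layer, identifying the effective back-propagated training direction $\dot\sigma(\tilde\alpha^{(L)})\bigl(\tfrac{1}{\sqrt{n_L}}W^{(L)}\bigr)^T d_t$ for the subnetwork, establishing $O(1/\sqrt{n_L})$ drift bounds on the last-layer weights and the individual preactivations via a Gr\"onwall-type estimate, and using the bounded second derivative of $\sigma$ to convert the preactivation drift into an $O(1/\sqrt{n_L})$ drift of $\dot\sigma(\tilde\alpha^{(L)}_i)$ and hence of the NTK. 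The one place where you genuinely diverge is in breaking the circularity between the drift bounds and the applicability of the induction hypothesis: you propose a stopping-time bootstrap localized to the last layer, whereas the paper extracts this into a standalone a priori estimate (Lemma \ref{lem:control-w}) showing $\sup_{t\le T}\lVert\tfrac{1}{\sqrt{n_\ell}}(W^{(\ell)}(t)-W^{(\ell)}(0))\rVert_{op}\to 0$ for \emph{all} layers simultaneously, proved by propagating polynomial bounds on the layer-wise directions and subnetwork NTKs through a nonlinear Gr\"onwall inequality; it then only needs the law of large numbers to bound $\lVert\tfrac{1}{\sqrt{n_L}}W^{(L)}(0)\rVert_{op}$ (using that $n_{L+1}$ is fixed). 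Both devices are legitimate; the paper's lemma has the advantage of being reusable across the induction and of cleanly separating the a priori control from the limit computation, while your bootstrap is more local but requires some care with the order of the sequential limits, since the "NTK deviates by more than $\epsilon$" trigger only makes sense after $n_1,\dots,n_{L-1}\to\infty$ have already been taken. One small looseness to tighten: the boundedness of the effective direction should be phrased in terms of the operator norm of $\tfrac{1}{\sqrt{n_L}}W^{(L)}$ (bounded because it has finitely many columns), not merely the fact that individual entries are $\mathcal{N}(0,1)$.
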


\begin{proof}
As in the previous theorem, the proof is by induction on the depth of the network. When $L=1$, the neural tangent kernel does not depend on the parameters, it is therefore constant during training.

For the induction step, we again split an $L+1$ network into a network of depth $L$ with parameters $\tilde{\theta}$ and top layer connection weights $W^{(L)}$ and bias $b^{(L)}$. The smaller network follows the training direction
$$
 d'_{t} =  \dot{\sigma}\left(\tilde{\alpha}^{(L)}(t)\right) \left(\frac{1}{\sqrt{n_L}}W^{(L)}(t)\right)^T d_t \label{eq:direct-small-network}
$$
for $i=1, \ldots, n_L$, where the function $\tilde{\alpha}^{(L)}_i(t) $ is defined as $ \tilde{\alpha}^{(L)}_i(\cdot ; \theta(t))$. We now want to apply the induction hypothesis to the smaller network. For this, we need to show that
$ \int_{0}^{T} \lVert d'_t \rVert_{p^{in}} \mathrm{d} t $ is stochastically bounded as $ n_1, \ldots, n_L \to \infty $.  Since $ \sigma $ is a $c$-Lipschitz function, we have
that 
\[
	\lVert d'_t \rVert_{p^{in}} \leq c \lVert  \frac{1}{\sqrt{n_L}} W^{(L)} (t) \rVert_{op}
	\lVert d_t \rVert_{p^{in}}.
\] 
To apply the induction hypothesis, we now need to bound $ \lVert  \frac{1}{\sqrt{n_L}} W^{(L)} (t) \rVert_{op} $. For this, we use the following lemma, which is proven in Appendix \ref{Appendix-3} below:

\begin{lem} \label{lem:control-w}
With the setting of Theorem \ref{thm:conv-ntk-training}, for a network of depth $ L + 1$, for any $ \ell =1, \ldots, L $, we have the convergence in probability:  \[ 
\lim_{n_L \to \infty} \cdots \lim_{n_1 \to \infty} 	
\sup_{t \in [0, T]} \lVert \frac{1}{\sqrt{n_\ell}} \left( W^{(\ell)}(t) - W^{(\ell)}(0) \right) \rVert_{op} = 0
\]
\end{lem}

From this lemma, to bound  $ \lVert  \frac{1}{\sqrt{n_L}} W^{(L)} (t) \rVert_{op} $, it is hence enough to bound $ \lVert  \frac{1}{\sqrt{n_L}} W^{(L)} (0) \rVert_{op} $. From the law of large numbers, we obtain that the norm of each of the $ n_{L + 1} $ rows of $ W^{(L)} (0) $ is bounded, and hence that $ \lVert  \frac{1}{\sqrt{n_L}} W^{(L)} (0) \rVert_{op} $ is bounded (keep in mind that $ n_{L + 1} $ is fixed, while $ n_1, \ldots, n_L $ grow).

From the above considerations, we can apply the induction hypothesis to the smaller network, yielding, in the limit as $n_1, \ldots, n_L \to \infty$ (sequentially), that the dynamics is governed by the constant kernel $ \Theta^{(L)}_\infty $:
\[
    \partial_t \tilde{\alpha}^{(L)}_{i}(t) =  \frac{1}{\sqrt{n_L}} \Phi_{\Theta^{(L)}_\infty} \left( \left< \dot{\sigma}\left(\tilde{\alpha}^{(L)}_i(t)\right) \left(W^{(L)}_i(t)\right)^T d_t, \cdot \right>_{p^{in}} \right) .
\] 
At the same time, the parameters of the last layer evolve according to
\begin{align*}
 \partial_t W^{(L)}_{ij}(t) &= \frac{1}{\sqrt{n_L}} \left< \alpha^{(L)}_i(t),  d_{t, j} \right>_{p^{in}}.
\end{align*}

We want to give an upper bound on the variation of the weights columns $W_i^{(L)}(t)$ and of the activations $\tilde{\alpha}^{(L)}_{i}(t)$ during training in terms of $L^2$-norm and $p^{in}$-norm respectively. Applying the Cauchy-Schwarz inequality for each $ j $, summing and using $ \partial_t || \cdot || \leq || \partial_t \cdot || $), we have
\begin{align*}
    \partial_t \left\lVert W^{(L)}_{i}(t) - W^{(L)}_{i}(0) \right\rVert_2 &\leq \frac{1}{\sqrt{n_L}} ||\alpha^{(L)}_i(t)||_{p^{in}} ||d_t||_{p^{in}}. 
\end{align*}

Now, observing that the operator norm of $\Phi_{\Theta_\infty^{(L)}}$ is equal to $\vert \vert \Theta_\infty^{(L)}\vert \vert_{op}$, defined in the introduction of Appendix \ref{sec:proofs}, and using the Cauchy-Schwarz inequality, we get
\begin{align*}
    \partial_t \left\lVert \tilde{\alpha}^{(L)}_{i}(t) - \tilde{\alpha}^{(L)}_{i}(0) \right\rVert_{p^{in}} &\leq \frac{1}{\sqrt{n_L}} \left\lVert \Theta^{(L)}_\infty \right\rVert_{op} \left\lVert \dot{\sigma}\left(\tilde{\alpha}^{(L)}_i(t)\right) \right\rVert_\infty \left\lVert W^{(L)}_i(t)\right\rVert_2 \left\lVert d_t \right\rVert_{p^{in}},
\end{align*}
where the sup norm $ \lVert \cdot \rVert_{\infty} $ is defined by $\left\lVert f \right\lVert_\infty = \sup_x | f(x) |.$

To bound both quantities simultaneously, study the derivative of the quantity
$$
A(t) = ||\alpha^{(L)}_i(0)||_{p^{in}} + c \left\lVert \tilde{\alpha}^{(L)}_{i}(t) - \tilde{\alpha}^{(L)}_{i}(0) \right\rVert_{p^{in}} + || W^{(L)}_i(0) ||_2 + \left\lVert W^{(L)}_{i}(t) - W^{(L)}_{i}(0) \right\rVert_2.
$$
We have
\begin{align*}
 \partial_t A(t) &\leq \frac{1}{\sqrt{n_L}} \left( c^2 \left\lVert \Theta^{(L)}_\infty \right\rVert_{op} \left\lVert W^{(L)}_i(t)\right\rVert_2 + ||\alpha^{(L)}_i(t)||_{p^{in}} \right) ||d_t||_{p^{in}} \\ 
    &\leq \frac{\max\{c^2 \|\Theta^{(L)}_\infty \|_{op} , 1\} }{\sqrt{n_L}}\|d_t \|_{p^{in}}  A(t),
\end{align*} 
where, in the first inequality, we have used that $ | \dot{\sigma} | \leq c $ and, in the second inequality, that the sum $\lVert W^{(L)}_i(t) \rVert_2 + ||\alpha^{(L)}_i(t)||_{p^{in}}$ is bounded by $A(t)$.
Applying Gr\"onwall's Lemma, we now get
$$
 A(t) \leq A(0) \exp\left(\frac{\max\{c^2 \|\Theta^{(L)}_\infty \|_{op} , 1\} }{\sqrt{n_L}} \int_0^t \|d_s \|_{p^{in}} ds\right).
$$
Note that $ \|\Theta^{(L)}_\infty \|_{op}$ is constant during training. Clearly the value inside of the exponential converges to zero in probability as $n_L \to \infty$ given that the integral $\int_0^t \|d_t \|_{p^{in}} ds$ stays stochastically bounded. The variations of the activations $\left\lVert \tilde{\alpha}^{(L)}_{i}(t) - \tilde{\alpha}^{(L)}_{i}(0) \right\rVert_{p^{in}}$ and weights $\left\lVert W^{(L)}_{i}(t) - W^{(L)}_{i}(0) \right\rVert_2$ are bounded by $c^{-1}(A(t) - A(0))$ and $A(t) - A(0)$ respectively, which converge to zero at rate $O\left(\frac{1}{\sqrt{n_L}}\right)$. 

We can now use these bounds to control the variation of the NTK and to prove the theorem. To understand how the NTK evolves, we study the evolution of the derivatives with respect to the parameters. The derivatives with respect to the bias parameters of the top layer $\partial_{b^{(L)}_j}f_{\theta, j'}$ are always equal to $\delta_{jj'}$. The derivatives with respect to the connection weights of the top layer are given by
$$
\partial_{W^{(L)}_{ij}}f_{\theta, j'}(x) = \frac{1}{\sqrt{n_L}} \alpha^{(L)}_i(x ;\theta) \delta_{jj'}.
$$
The pre-activations $\tilde{\alpha}^{(L)}_i$ evolve at a rate of $\frac{1}{\sqrt{n_L}}$ and so do the activations $\alpha^{(L)}_i$. The summands $\partial_{W^{(L)}_{ij}}f_{\theta, j'}(x) \otimes \partial_{W^{(L)}_{ij}}f_{\theta, j''}(x')$ of the NTK hence vary at rate of $n_L^{-3/2}$ which induces a variation of the NTK of rate $\frac{1}{\sqrt{n_L}}$.

Finally let us study the derivatives with respect to the parameters of the lower layers 
$$
\partial_{\tilde{\theta}_k} f_{\theta, j}(x) = \frac{1}{\sqrt{n_L}} \sum_{i=1}^{n_L} \partial_{\tilde{\theta}_k} \tilde{\alpha}^{(L)}_{i}(x;\theta) \dot{\sigma}\left(\tilde{\alpha}^{(L)}_i(x;\theta)\right) W^{(L)}_{ij}.
$$
Their contribution to the NTK $\Theta^{(L+1)}_{jj'}(x, x')$ is
\begin{align*}
    &\frac{1}{n_L}\! \sum_{i, i'=1}^{n_L}\! \Theta^{(L)}_{ii'}(x, x') \dot{\sigma}\!\left(\!\tilde{\alpha}^{(L)}_i(x;\theta)\!\right) \dot{\sigma}\!\left(\!\tilde{\alpha}^{(L)}_{i'}(x';\theta)\!\right) W^{(L)}_{ij}W^{(L)}_{i'j'}.
\end{align*}
By the induction hypothesis, the NTK of the smaller network $\Theta^{(L)}$ tends to $\Theta^{(L)}_\infty \delta_{ii'}$ as $n_1, ..., n_{L-1} \to \infty$. The contribution therefore becomes
\begin{align*}
    &\frac{1}{n_L}\! \sum_{i=1}^{n_L}\! \Theta^{(L)}_\infty(x, x') \dot{\sigma}\!\left(\!\tilde{\alpha}^{(L)}_i(x;\theta)\!\right) \dot{\sigma}\!\left(\!\tilde{\alpha}^{(L)}_{i}(x';\theta)\!\right) W^{(L)}_{ij}W^{(L)}_{ij'}.
\end{align*}
The connection weights $W^{(L)}_{ij}$ vary at rate $\frac{1}{\sqrt{n_L}}$, inducing a change of the same rate to the whole sum. We simply have to prove that the values $\dot{\sigma}(\tilde{\alpha}^{(L)}_i(x;\theta))$ also change at rate $\frac{1}{\sqrt{n_L}}$. Since the second derivative of $ \sigma $ is bounded, we have that 
\[
 \partial_t \left( \dot{\sigma}\left(\tilde{\alpha}^{(L)}_i(x;\theta(t))\right)  \right) = O\left( \partial_t \tilde{\alpha}^{(L)}_i(x;\theta(t)) \right).
\]
Since $  \partial_t \tilde{\alpha}^{(L)}_i(x;\theta(t)) $ goes to zero at a rate $\frac{1}{\sqrt{n_L}}$ by the bound on $ A(t) $ above, this concludes the proof.



\end{proof}

It is somewhat counterintuitive that the variation of the activations of the hidden layers $\alpha^{(\ell)}_i$ during training goes to zero as the width becomes large\footnote{As a consequence, the pre-activations stay Gaussian during training as well, with the same covariance $\Sigma^{(\ell)}$.}. It is generally assumed that the purpose of the activations of the hidden layers is to learn ``good'' representations of the data during training. However note that even though the variation of each individual activation shrinks, the number of neurons grows, resulting in a significant collective effect. This explains why the training of the parameters of each layer $\ell$ has an influence on the network function $f_\theta$ even though it has asymptotically no influence on the individual activations of the layers $\ell'$ for $\ell<\ell'<L$.

\subsection{A Priori Control during Training} \label{Appendix-3}
The goal of this section is to prove Lemma \ref{lem:control-w}, which is a key ingredient in the proof of Theorem \ref{thm:conv-ntk-training}. Let us first recall it:
\setcounter{lem}{0}
\begin{lem} \label{lem:control-w}
With the setting of Theorem \ref{thm:conv-ntk-training}, for a network of depth $ L + 1$, for any $ \ell =1, \ldots, L $, we have the convergence in probability:  \[ 
\lim_{n_L \to \infty} \cdots \lim_{n_1 \to \infty} 	
\sup_{t \in [0, T]} \lVert \frac{1}{\sqrt{n_\ell}} \left( W^{(\ell)}(t) - W^{(\ell)}(0) \right) \rVert_{op} = 0
\]
\end{lem}

\begin{proof}
We prove the lemma for all $ \ell =1, \ldots, L $ simultaneously, by expressing the variation of the weights $ \frac{1}{\sqrt{n_\ell}} W^{(\ell)} $ and activations $ \frac{1}{\sqrt{n_\ell}} \tilde{\alpha}^{(\ell)} $ in terms of `back-propagated' training directions $ d^{(1)}, \ldots, d^{(L)} $ associated with the lower layers and the NTKs of the corresponding subnetworks:    
\begin{enumerate}
	 \item At all times, the evolution of the preactivations and weights is given by:
\begin{align*}
\partial_{t}\tilde{\alpha}^{(\ell)} & =\Phi_{\Theta^{(\ell)}} \left( <d_{t}^{(\ell)},\cdot>_{p^{in}} \right) \\
\partial_{t}W^{(\ell)} & =\frac{1}{\sqrt{n_{\ell}}}<\alpha^{(\ell)},d_{t}^{(\ell+1)}>_{p^{in}}, 
\end{align*}
where the layer-wise training directions $ d^{(1)}, \ldots, d^{(L)} $ are defined recursively by 
\begin{align*}
d_{t}^{\left(\ell\right)} & =\begin{cases}
d_{t} & \text{ if }\ell=L+1\\
\dot{\sigma}\left(\tilde{\alpha}^{\left(\ell\right)}\right)\left(\frac{1}{\sqrt{n_{\ell}}}W^{\left(\ell\right)}\right)^{T}d_{t}^{\left(\ell+1\right)} & \text{ if }\ell\leq L,
\end{cases}
\end{align*}
and where the sub-network NTKs $ \Theta^{(\ell)}$ satisfy 
\begin{align*}
\Theta^{(1)} & =\left[\left[\frac{1}{\sqrt{n_{0}}}\alpha^{(0)}\right]^{T}\left[\frac{1}{\sqrt{n_{0}}}\alpha^{(0)}\right]\right]\otimes Id_{n_{\ell}}+\beta^{2}\otimes Id_{n_{\ell}}\\
\Theta^{(\ell+1)} & =\frac{1}{\sqrt{n_{\ell}}}W^{(\ell)}\dot{\sigma}(\tilde{\alpha}^{(\ell)})\Theta^{(\ell)}\dot{\sigma}(\tilde{\alpha}^{(\ell)})\frac{1}{\sqrt{n_{\ell}}}W^{(\ell)}\\
 & +\left[\left[\frac{1}{\sqrt{n_{\ell}}}\alpha^{(\ell)}\right]^{T}\left[\frac{1}{\sqrt{n_{\ell}}}\alpha^{(\ell)}\right]\right]\otimes Id_{n_{\ell}}+\beta^{2}\otimes Id_{n_{\ell}}.
\end{align*}
\item
Set $ w^{(k)} (t) := \left\Vert \frac{1}{\sqrt{n_{k}}}W^{(k)} (t)\right\Vert _{op} $ and
$a^{(k)}\left(t\right):=\left\Vert \frac{1}{\sqrt{n_{k}}}\alpha^{\left(k\right)}\left(t\right)\right\Vert _{p^{in}}$.
The identities of the previous step yield the following recursive bounds:
\[
\left\Vert d_{t}^{(\ell)}\right\Vert _{p^{in}}\le c w^{(\ell)}(t)\left\Vert d_{t}^{(\ell+1)}\right\Vert _{p^{in}},
\]
where $c$ is the Lipschitz constant of $\sigma$. These bounds lead to
\[
\left\Vert d_{t}^{(\ell)}\right\Vert _{p^{in}}\leq c^{L+1-\ell}\prod_{k=\ell}^{L}w^{(k)}(t)\left\Vert d_{t}\right\Vert _{p^{in}}.
\]
For the subnetworks NTKs we have the recursive bounds
\begin{align*}
\|\Theta^{(1)}\|_{op} & \le(a^{(0)}(t))^{2}+\beta^{2}.\\
\|\Theta^{(\ell+1)}\|_{op} & \le c^{2} (w^{(\ell)}(t) )^2
\| \Theta^{(\ell)} \|_{op}+(a^{(\ell)}(t))^{2}+\beta^{2},
\end{align*}
which lead to 
\[
\|\Theta^{(\ell+1)}\|_{op}\leq\mathcal{P}\left(a^{(1)},\ldots,a^{(\ell)},w^{(1)},\ldots,w^{(\ell)}\right),
\]
where $ \mathcal P $ is a polynomial which only depends on $ \ell, c, \beta $ and $ p^{in} $.
\item Set 
\begin{align*}
\tilde{a}^{(k)}\left(t\right) & :=\left\Vert \frac{1}{\sqrt{n_{k}}}\left(\tilde{\alpha}^{\left(k\right)}\left(t\right)-\tilde{\alpha}^{\left(k\right)}\left(0\right)\right)\right\Vert _{p^{in}}\\
\tilde{w}^{(k)}\left(t\right) & :=\left\Vert \frac{1}{\sqrt{n_{k}}}\left(W^{\left(k\right)}\left(t\right)-W^{\left(k\right)}\left(0\right)\right)\right\Vert _{op}
\end{align*}
 and define 
\begin{align*}
A\left(t\right)=\sum_{k=1}^{L}a^{\left(k\right)}\left(0\right)+c\tilde{a}^{\left(k\right)}\left(t\right)+w^{\left(k\right)}\left(0\right)+\tilde{w}^{\left(k\right)}\left(t\right).
\end{align*}
Since $a^{\left(k\right)}\left(t\right)\leq a^{\left(k\right)}\left(0\right)+c\tilde{a}^{\left(k\right)}\left(t\right)$
and $w^{\left(k\right)}\left(t\right)\leq w^{\left(k\right)}\left(0\right)+\tilde{w}^{\left(k\right)}\left(t\right)$,
controlling $A\left(t\right)$ will enable us to control the $a^{\left(k\right)}\left(t\right)$
and $w^{\left(k\right)}\left(t\right)$. Using the formula at the
beginning of the first step, we obtain
\begin{align*}
\partial_{t}\tilde{a}^{\left(\ell\right)}\left(t\right) & \le\frac{1}{\sqrt{n_{\ell}}}\|\Theta^{(\ell)}(t)\|_{op}\|d_{t}^{(\ell)}\|_{p^{in}}\\
\partial_{t}\tilde{w}^{\left(\ell\right)}\left(t\right) & \le\frac{1}{\sqrt{n_{\ell}}}a^{\left(\ell\right)}\left(t\right)\|d_{t}^{(\ell+1)}\|_{p^{in}}.
\end{align*}
This allows one to bound the derivative of $A\left(t\right)$ as follows:
\[
\partial_{t}A\left(t\right)\le\sum_{\ell=1}^{L}\frac{c}{\sqrt{n_{\ell}}}\|\Theta^{(\ell)}(t)\|_{op}\|d_{t}^{(\ell)}\|_{p^{in}}+\frac{1}{\sqrt{n_{\ell}}}a^{\left(\ell\right)}\left(t\right)\|d_{t}^{(\ell+1)}\|_{p^{in}}.
\]
Using the polynomial bounds on $\|\Theta^{(\ell)}(t)\|_{op}$ and
$\|d_{t}^{(\ell+1)}\|_{p^{in}}$ in terms of the $a^{\left(k\right)}$
and $w^{\left(k\right)}$ for $k=1,\ldots\ell$ obtained in the previous
step, we get that 
\[
\text{\ensuremath{\partial_{t}A\left(t\right)\leq\frac{1}{\sqrt{\min\left\{ n_{1},\ldots,n_{L}\right\} }}\mathcal{Q}\left(w^{\left(1\right)}\left(t\right),\ldots,w^{\left(L\right)}\left(t\right),a^{\left(1\right)}\left(t\right),\ldots,a^{\left(L\right)}\left(t\right)\right)\|d_{t}\|_{p^{in}},}}
\]
where the polynomial $Q$ only depends on $L,c,\beta$ and $p^{in}$
and has positive coefficients. As a result, we can use $a^{\left(k\right)}\left(t\right)\leq a^{\left(k\right)}\left(0\right)+c\tilde{a}^{\left(k\right)}\left(t\right)$
and $w^{\left(k\right)}\left(t\right)\leq w^{\left(k\right)}\left(0\right)+\tilde{w}^{\left(k\right)}\left(t\right)$
to get the polynomial bound
\[
\partial_{t}A\left(t\right)\leq\frac{1}{\sqrt{\min\left\{ n_{1},\ldots,n_{L}\right\} }}\tilde{\mathcal{Q}}\left(A\left(t\right)\right)\|d_{t}\|_{p^{in}}.
\]

\item
Let us now observe that $A\left(0\right)$ is stochastically bounded
as we take the sequential limit $\lim_{n_{L}\to\infty}\cdots\lim_{n_{1}\to\infty}$
as in the statement of the lemma. In this limit, we indeed have that
$w^{\left(\ell\right)}$and $a^{\left(\ell\right)}$ are convergent: 
we have $w^{\left(\ell\right)}\to0$, while $a^{\left(\ell\right)}$ converges
by Proposition \ref{prop:output_limit}. 

The polynomial control we obtained on the derivative of $A\left(t\right)$
now allows one to use (a nonlinear form of, see e.g. \cite{dragomir}) Grönwall's Lemma: we obtain that $A\left(t\right)$
stays uniformly bounded on $\left[0,\tau\right]$ for some $\tau=\tau\left(n_{1},\ldots,n_{L}\right)>0$,
and that $\tau\to T$ as $\min\left(n_{1},\ldots,n_{L}\right)\to\infty$,
owing to the $\frac{1}{\sqrt{\min\left\{ 1,\ldots,n_{L}\right\} }}$in
front of the polynomial. Since $A\left(t\right)$ is bounded, the
differential bound on $A\left(t\right)$ gives that the derivative
$\partial_{t}A\left(t\right)$ converges uniformly to $0$ on $\left[0,\tau\right]$
for any $\tau<T$, and hence $A\left(t\right)\to A\left(0\right)$.
This concludes the proof of the lemma.

\end{enumerate}
\end{proof}

\subsection{Positive-Definiteness of $ \Theta_\infty^{(L)} $} \label{Appendix-4}
This subsection is devoted to the proof of Proposition \ref{prop:pos-def}, which we now recall:
\begin{prop}
\label{prop:pos-def}For a non-polynomial Lipschitz nonlinearity $\sigma$,
for any input dimension $n_{0}$, the restriction of the limiting
NTK $\Theta_{\infty}^{(L)}$ to the unit sphere $\mathbb{S}^{n_{0}-1}=\{x\in\mathbb{R}^{n_{0}}:x^{T}x=1\}$
is positive-definite if $L\geq2$. 
\end{prop}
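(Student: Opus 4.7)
The plan is to reduce to Schoenberg's characterization of strictly positive-definite dot-product kernels on the sphere. On $\mathbb{S}^{n_0-1}$, an easy induction shows that $\Sigma^{(L)}(x,x) = s_L$ is constant, so $\Sigma^{(L)}(x,x')$ depends only on $x^T x'$: write $\Sigma^{(L)}(x,x') = S^{(L)}(x^T x')$. I would then show, by induction on $L$, that $S^{(L)}(t) = \sum_{k \geq 0} s_k^{(L)} t^k$ has \emph{strictly} positive coefficients at every order $k$, as soon as $L \geq 2$.

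The key input is the dual activation framework from \cite{Daniely}. Setting $\sigma_L(u) = \sigma(\sqrt{s_L}\,u)$, one has $\Sigma^{(L+1)}(x,x') = \check{\sigma}_L(\Sigma^{(L)}(x,x')/s_L) + \beta^2$, where the dual activation admits the expansion $\check{\sigma}_L(\rho) = \sum_{i \geq 0} (\alpha_i^L)^2 \rho^i$ in the Hermite coefficients $\alpha_i^L$ of $\sigma_L$ in $L^2(\mathcal{N}(0,1))$. Since $\sigma$ is non-polynomial and Lipschitz, so is each rescaled $\sigma_L$, and Daniely's theorem then guarantees that infinitely many $(\alpha_i^L)^2$ are strictly positive, including at least one index $i \geq 1$. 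For the base case $L = 2$, note that $S^{(1)}(t)/s_1 = \tfrac{1}{n_0 s_1} t + \tfrac{\beta^2}{s_1}$ is affine with \emph{both} coefficients strictly positive, crucially because $\beta > 0$; expanding any power $(S^{(1)}/s_1)^i$ by the binomial theorem then yields strictly positive coefficients at every degree $0, 1, \ldots, i$, and summing over $i$ weighted by $(\alpha_i^1)^2$ with infinitely many nonzero produces $s_k^{(2)} > 0$ for all $k \geq 0$. The inductive step is identical: once $S^{(L-1)}$ has all Taylor coefficients strictly positive, so does $(S^{(L-1)}/s_{L-1})^i$ for every $i \geq 1$, and composing with $\check{\sigma}_{L-1}$ preserves full positivity (the constant term gets an extra $\beta^2$ boost, removing any worry about $\alpha_0 = 0$).

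Once $S^{(L)}$ has strictly positive coefficients at every order for $L \geq 2$, Schoenberg's theorem, together with the standard strict-positive-definiteness criterion for sphere dot-product kernels (for $n_0 \geq 2$ it suffices that the set of indices with positive coefficients meets both parities infinitely often; the case $n_0 = 1$ reduces to a direct $2 \times 2$ check using $S^{(L)}(1) > |S^{(L)}(-1)|$), shows that $\Sigma^{(L)}$ is strictly positive-definite on $\mathbb{S}^{n_0-1}$. Finally, unfolding the NTK recursion gives $\Theta_\infty^{(L)} = \Sigma^{(L)} + \sum_{\ell=1}^{L-1} \Sigma^{(\ell)} \prod_{k=\ell+1}^{L} \dot{\Sigma}^{(k)}$, in which every summand after $\Sigma^{(L)}$ is positive-semidefinite as a Schur product of PSD kernels (each $\Sigma^{(\ell)}$ is PSD, and $\dot{\Sigma}^{(k)}$ is the covariance of the random field $\dot{\sigma}(f)$, hence PSD). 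The strict positive-definiteness of $\Sigma^{(L)}$ therefore transfers to $\Theta_\infty^{(L)}$. The main obstacle in this plan is the propagation step: ensuring that the ``non-polynomial $\Rightarrow$ infinitely many nonzero Hermite coefficients'' property survives at every depth after the rescaling by $s_L$ and addition of $\beta^2$; the role of $\beta > 0$ is also essential, because without it the base-case kernel $S^{(1)}$ would only contribute a single power of $x^T x'$ and the positivity cascade would collapse to even parities only.
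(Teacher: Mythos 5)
Your proposal is correct, and it shares the two external ingredients with the paper's proof (the Hermite/dual-activation expansion of Lemma 12 in \cite{Daniely} and the Gneiting--Sch\"onberg criterion for dot-product kernels on spheres), as well as the final reduction from $\Theta_\infty^{(L)}$ to $\Sigma^{(L)}$ by discarding positive-semidefinite Schur-product terms. Where you genuinely diverge is in the inductive step. The paper propagates positive-definiteness \emph{abstractly}: if $\Sigma^{(L)}$ is positive-definite, the Gaussian vector $\left(f(x_i)\right)_i$ is non-degenerate, so $\sum_i c_i \sigma(f(x_i)) = 0$ almost surely forces $c = 0$ because $\sigma$ is non-constant; hence only the base case $\Sigma^{(2)}$ requires the Hermite expansion and Sch\"onberg, and the induction itself needs neither the sphere restriction nor any power-series bookkeeping. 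You instead carry the stronger invariant that \emph{every} power-series coefficient of $S^{(L)}$ is strictly positive for $L \geq 2$, re-invoking the dual-activation expansion (with the rescaling by $s_L$ and the check that $\Sigma^{(L)}$ remains a dot-product kernel with constant diagonal on the sphere) at every layer. Your route costs more bookkeeping --- you must verify at each level that the rescaled $\sigma_L$ stays non-polynomial, that some Hermite index $i \geq 1$ survives, and that the composition of the two positive power series converges and can be rearranged --- but it buys a quantitatively stronger conclusion (all-orders strict positivity of the coefficients of $\Sigma^{(L)}$ for every $L \geq 2$, not just of $\Sigma^{(2)}$), and your observation that $\beta > 0$ is what injects all parities into the cascade is exactly the mechanism the paper exploits, just localized differently. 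Both arguments are sound; the paper's is leaner because the heavy machinery is used exactly once.
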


A key ingredient for the proof of Proposition \ref{prop:pos-def}
is the following Lemma, which comes from \cite{Daniely}.
\begin{lem}[Lemma 12(a) in suppl. mat. of \cite{Daniely}]
\label{lem:daniely}Let $\hat{\mu}:\left[-1,1\right]\to\mathbb{R}$
denote the dual of a Lipschitz function $\mu:\mathbb{R}\to\mathbb{R}$,
defined by $\hat{\mu}\left(\rho\right)=\mathbb{E}_{\left(X,Y\right)}\left[\mu\left(X\right)\mu\left(Y\right)\right]$
where $\left(X,Y\right)$ is a centered Gaussian vector of covariance
$\Sigma$, with 
\[
\Sigma=\begin{pmatrix}1 & \rho\\
\rho & 1
\end{pmatrix}.
\]
If the expansion of $\mu$ in Hermite polynomials $\left(h_{i}\right)_{i\geq0}$
is given by $\mu=\sum_{i=0}^{\infty}a_{i}h_{i}$, we have 
\[
	\hat{\mu}\left(\rho\right)=\sum_{i=0}^{\infty}a_{i}^{2}\rho^{i}.
\] 
\end{lem}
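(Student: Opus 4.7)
The plan is to derive the identity from the fundamental ``diagonalization'' property of Hermite polynomials under the Ornstein–Uhlenbeck semigroup, together with the fact that the normalized Hermite polynomials form an orthonormal basis of $L^{2}(\gamma)$, where $\gamma$ denotes the standard Gaussian measure on $\mathbb{R}$. Throughout the proof we fix the convention that $(h_{i})_{i \geq 0}$ is the orthonormal Hermite basis, so that $\mathbb{E}_{X\sim\gamma}[h_{i}(X) h_{j}(X)] = \delta_{ij}$; this is the convention that makes the Parseval identity $\|\mu\|_{L^2(\gamma)}^{2} = \sum_{i} a_{i}^{2}$ hold, which in turn is what forces the stated form $\sum a_{i}^{2} \rho^{i}$ at $\rho = 1$.

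The single computational core of the proof is the bivariate orthogonality identity
\[
\mathbb{E}\left[ h_{i}(X)\, h_{j}(Y) \right] \;=\; \delta_{ij}\, \rho^{i},
\]
for $(X,Y)$ centered Gaussian with covariance $\Sigma$ as in the statement. I would establish this by writing $Y \stackrel{d}{=} \rho X + \sqrt{1-\rho^{2}}\, Z$ with $Z \sim \mathcal{N}(0,1)$ independent of $X$, so that $\mathbb{E}[h_{j}(Y)\mid X] = (U_{\rho} h_{j})(X)$ where $U_{\rho}$ is the Mehler/Ornstein–Uhlenbeck operator. A direct calculation using the generating function $\sum_{i \geq 0} h_{i}(x)\, t^{i}/\sqrt{i!} = \exp(tx - t^{2}/2)$ and the Gaussian moment generating function shows $U_{\rho} h_{j} = \rho^{j} h_{j}$; tensorizing and using orthonormality gives the displayed identity. (Equivalently, one can invoke Mehler's formula for the joint density of $(X,Y)$, expand the exponential factor, and integrate term by term against $h_i(x) h_j(y)$.)

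With the orthogonality identity in hand, the conclusion is a short computation. Since $\mu$ is Lipschitz it has at most linear growth, hence lies in $L^{2}(\gamma)$, so the Hermite expansion $\mu = \sum_{i} a_{i} h_{i}$ converges in $L^{2}(\gamma)$ and $\sum a_{i}^{2} < \infty$. By Cauchy–Schwarz applied to the bivariate Gaussian law, the partial sums $\mu_{N} := \sum_{i \leq N} a_{i} h_{i}$ satisfy
\[
\mathbb{E}\!\left[(\mu(X) - \mu_{N}(X))^{2}\right] \;=\; \sum_{i > N} a_{i}^{2} \;\to\; 0,
\]
and similarly for $\mu(Y) - \mu_N(Y)$, so $\mathbb{E}[\mu(X)\mu(Y)] = \lim_{N} \mathbb{E}[\mu_{N}(X)\mu_{N}(Y)]$. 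Using bilinearity and the orthogonality identity,
\[
\mathbb{E}[\mu_{N}(X) \mu_{N}(Y)] \;=\; \sum_{i, j \leq N} a_{i} a_{j}\, \mathbb{E}[h_{i}(X) h_{j}(Y)] \;=\; \sum_{i \leq N} a_{i}^{2} \rho^{i}.
\]
Letting $N \to \infty$, and noting that $|\sum_{i} a_{i}^{2} \rho^{i}| \leq \sum_{i} a_{i}^{2} < \infty$ for $|\rho| \leq 1$ so the series converges absolutely, yields $\hat{\mu}(\rho) = \sum_{i \geq 0} a_{i}^{2} \rho^{i}$.

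The main (modest) obstacle is the orthogonality identity $\mathbb{E}[h_{i}(X) h_{j}(Y)] = \delta_{ij} \rho^{i}$; everything else is bookkeeping and standard $L^{2}$ limit arguments. My preferred derivation is the eigenfunction route via the OU semigroup because it makes the appearance of $\rho^{i}$ transparent (the $i$-th Hermite mode is contracted by $\rho^{i}$ per step of the OU flow), but either the generating-function method or Mehler's formula works equally well and avoids introducing the semigroup language.
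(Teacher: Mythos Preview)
Your proof is correct and follows the standard route via the Mehler/Ornstein--Uhlenbeck eigenfunction identity $\mathbb{E}[h_i(X)h_j(Y)] = \delta_{ij}\rho^i$, with the $L^2(\gamma)$ limit argument handled cleanly. Note, however, that the paper does not supply its own proof of this lemma: it is quoted as Lemma~12(a) from the supplementary material of \cite{Daniely} and used as a black box in the proof of Proposition~\ref{prop:pos-def}, so there is no in-paper argument to compare against.
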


The other key ingredient for proving Proposition \ref{prop:pos-def}
is the following theorem, which is a slight reformulation of Theorem
1(b) in \cite{Gneiting}, which itself is a generalization of a classical result
of Sch\"onberg:
\begin{thm}
\label{thm:schoenberg}For a function $f:\text{\ensuremath{\left[-1,1\right]}}\to\mathbb{R}$ with $f\left(\rho\right)=\sum_{n=0}^{\infty}b_{n}\rho^{n}$,
the kernel $K_{f}^{\left(n_{0}\right)}:\mathbb{S}^{n_{0}-1}\times\mathbb{S}^{n_{0}-1}\to\mathbb{R}$
defined by
\[
K_{f}^{\left(n_{0}\right)}\left(x,x'\right)=f\left(x^{T}x'\right)
\]
 is positive-definite for any $n_{0}\geq1$ if and only if 
the coefficients $b_{n}$ are strictly positive for infinitely
many even and infinitely many odd integers $n$. 
\end{thm}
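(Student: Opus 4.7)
The plan is to reduce the claim to a power-series positivity statement via Theorem \ref{thm:schoenberg}, and then establish that positivity by a simultaneous induction on $L$ for the three quantities $\Sigma^{(L)}$, $\dot\Sigma^{(L)}$, and $\Theta^{(L)}_\infty$, using Lemma \ref{lem:daniely} as the main computational tool. The first observation is that on $\mathbb{S}^{n_0-1}$ the diagonal $c_L := \Sigma^{(L)}(x,x)$ is a constant (trivial induction, since the defining Gaussian integral depends only on $\Sigma^{(L)}(x,x)$), so the joint covariance of $(f(x), f(x'))$ under $\mathcal{N}(0, \Sigma^{(L)})$ is determined entirely by $\rho := x^T x'$. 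Consequently $\Sigma^{(L)}$, $\dot\Sigma^{(L)}$ and $\Theta^{(L)}_\infty$ all reduce to functions of $\rho$ alone; writing $\theta^{(L)}(\rho) := \Theta^{(L)}_\infty(x,x')$, it suffices by Theorem \ref{thm:schoenberg} to show that $\theta^{(L)}$ has strictly positive coefficients at infinitely many even and infinitely many odd degrees. I will in fact prove the stronger claim that for $L \geq 2$, \emph{every} coefficient of $\theta^{(L)}$ is strictly positive.

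To apply Lemma \ref{lem:daniely} I rescale to unit variance: set $\mu_L(t) := \sigma(\sqrt{c_L}\,t)$ and $\nu_L(t) := \dot\sigma(\sqrt{c_L}\,t)$, and let $\tilde\rho_L := \Sigma^{(L)}(\rho)/c_L \in [-1,1]$. The lemma then gives
\[
\Sigma^{(L+1)}(\rho) \;=\; \sum_{i \geq 0} \bigl(a_i^{(L)}\bigr)^2 \tilde\rho_L^{\,i} + \beta^2, \qquad \dot\Sigma^{(L+1)}(\rho) \;=\; \sum_{i \geq 0} \bigl(b_i^{(L)}\bigr)^2 \tilde\rho_L^{\,i},
\]
where $a_i^{(L)}$, $b_i^{(L)}$ are the Hermite coefficients of $\mu_L$ and $\nu_L$ respectively. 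A short but essential step is to observe that since $\sigma$ is non-polynomial and Lipschitz, neither $\sigma$ nor (a.e.) $\dot\sigma$ can coincide with a polynomial on a set of positive Gaussian measure; otherwise integrating would force $\sigma$ to be polynomial. Hence both $\mu_L$ and $\nu_L$ have infinitely many nonzero Hermite coefficients, including at least one at index $\geq 1$.

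The induction now runs as follows. For $L=1$, $\Sigma^{(1)}(\rho) = \rho/n_0 + \beta^2$ has strictly positive coefficients at degrees $0$ and $1$ (crucially using $\beta > 0$), so $\tilde\rho_1$ is affine with two strictly positive coefficients. Then every $\tilde\rho_1^{\,i}$ has strictly positive coefficients at all degrees $0, \ldots, i$, and summing against infinitely many positive $(a_i^{(1)})^2$ and $(b_i^{(1)})^2$ shows that $\Sigma^{(2)}(\rho)$ and $\dot\Sigma^{(2)}(\rho)$ both have strictly positive coefficients at every nonnegative degree. For the inductive step, if $\Sigma^{(L)}(\rho)$ has all strictly positive coefficients, then so does each $(\Sigma^{(L)}/c_L)^i$ for $i \geq 1$ (products and nonnegative sums preserve this), and the formulas above give the same conclusion for $\Sigma^{(L+1)}$ and $\dot\Sigma^{(L+1)}$. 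Finally, $\theta^{(L+1)} = \theta^{(L)} \cdot \dot\Sigma^{(L+1)} + \Sigma^{(L+1)}$, and multiplying any nonzero all-positive power series by an all-positive power series yields an all-positive power series; adding $\Sigma^{(L+1)}$ preserves this. Starting from $\theta^{(1)} = \Sigma^{(1)}$, which only has positive coefficients at degrees $0,1$, one application of this step produces $\theta^{(2)}$ with all positive coefficients, and induction delivers the conclusion for all $L \geq 2$. Theorem \ref{thm:schoenberg} then yields positive-definiteness of the restriction of $\Theta^{(L)}_\infty$ to $\mathbb{S}^{n_0-1}$ for every $n_0$.

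The main obstacle is not any single step but the careful bookkeeping of the joint induction: tracking positivity simultaneously through the three quantities, making rigorous the passage from non-polynomiality of $\sigma$ to non-polynomiality of $\dot\sigma$ in the Hermite sense, and being explicit that the constant-term positivity (driven by $\beta > 0$) is what triggers the "full" positivity starting at $L=2$, whereas at $L=1$ only two coefficients are positive and the restriction of $\Theta^{(1)}_\infty$ fails to satisfy the Schoenberg criterion.
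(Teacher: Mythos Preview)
Your proposal does not prove the stated theorem. Theorem~\ref{thm:schoenberg} is the Schoenberg--Gneiting characterization: it asserts, for an \emph{arbitrary} power series $f(\rho)=\sum b_n\rho^n$, that the kernel $(x,x')\mapsto f(x^Tx')$ is positive-definite on every sphere if and only if infinitely many even and infinitely many odd $b_n$ are strictly positive. You do not attempt to establish either direction of this equivalence; instead you \emph{invoke} Theorem~\ref{thm:schoenberg} as a black box and use it to prove that the specific limiting NTK $\Theta_\infty^{(L)}$ is positive-definite on the sphere. That is Proposition~\ref{prop:pos-def}, not Theorem~\ref{thm:schoenberg}. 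The paper itself does not prove Theorem~\ref{thm:schoenberg} either: it is quoted as a known result from \cite{Gneiting}.

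If your intention was actually to prove Proposition~\ref{prop:pos-def}, then your argument is a genuinely different route from the paper's. The paper reduces everything to the positive-definiteness of $\Sigma^{(2)}$ alone: it observes that $\Theta^{(L+1)}=\dot\Sigma^{(L)}\Theta^{(L)}+\Sigma^{(L+1)}$ with the first summand automatically positive \emph{semi}-definite (Schur product of PSD kernels), so it suffices that $\Sigma^{(L+1)}$ be positive-definite; this in turn is propagated upward from $\Sigma^{(2)}$ by a short Gaussian non-degeneracy argument, and only $\Sigma^{(2)}$ is analyzed via the Hermite/Schoenberg machinery. Your approach instead runs a joint induction on the power-series coefficients of $\Sigma^{(L)}$, $\dot\Sigma^{(L)}$, and $\theta^{(L)}$ directly, proving the stronger statement that for $L\ge 2$ \emph{all} coefficients of $\theta^{(L)}$ are strictly positive. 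This buys a more explicit quantitative picture of $\Theta_\infty^{(L)}$ but requires the extra step of controlling the Hermite expansion of $\dot\sigma$ (not needed in the paper's proof), and your justification that ``$\dot\sigma$ cannot coincide with a polynomial on a set of positive measure'' would need to be made precise, since $\dot\sigma$ is only defined a.e.\ and the argument must rule out that it agrees a.e.\ with a polynomial.
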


With Lemma \ref{lem:daniely} and Theorem \ref{thm:schoenberg} above,
we are now ready to prove Proposition \ref{prop:pos-def}.
\begin{proof}[Proof of Proposition \ref{prop:pos-def}]
We first decompose the limiting NTK $\Theta^{\left(L\right)}$ recursively,
relate its positive-definiteness to that of the activation kernels,
then show that the positive-definiteness of the activation kernels
at level $2$ implies that of the higher levels, and finally show the
positive-definiteness at level $2$ using Lemma \ref{lem:daniely}
and Theorem \ref{thm:schoenberg}:
\begin{enumerate}
\item Observe that for any $L\geq1$, using the notation of Theorem \ref{thm:convergence_NTK_initialization}, we have
\[
\Theta^{\left(L+1\right)}=\dot{\Sigma}^{\left(L\right)}\Theta^{\left(L\right)}+\Sigma^{\left(L+1\right)}.
\]
Note that the kernel $ \dot{\Sigma}^{\left(L\right)}\Theta^{\left(L\right)} $ is positive semi-definite, being the product of two positive semi-definite kernels.
Hence, if we show that $\Sigma^{\left(L+1\right)}$ is positive-definite,
this implies that $\Theta^{\left(L+1\right)}$ is positive-definite.
\item By definition, with the notation of Proposition \ref{prop:output_limit}
we have 
\[
\Sigma^{\left(L+1\right)}\left(x,x'\right)=\mathbb{E}_{f\sim\mathcal{N}\left(\text{0,}\Sigma^{\left(L\right)}\right)}\left[\sigma\left(f\left(x\right)\right)\sigma\left(f\left(x'\right)\right)\right]+\beta^{2}.
\]
This gives, for any collection of coefficients $c_{1},\ldots,c_{d}\in\mathbb{R}$
and any pairwise distinct $x_{1},\ldots,x_{d}\in\mathbb{R}^{n_{0}}$, that
\[
\sum_{i,j=1}^{d}c_{i}c_{j}\Sigma^{\left(L+1\right)}\left(x_{i},x_{j}\right)=\mathbb{E}\left[\left(\sum_{i}c_{i}\sigma\left(f\left(x_{i}\right)\right)\right)^{2}\right]+\left(\beta\sum_{i}c_{i}\right)^{2}.
\]
Hence the left-hand side only vanishes if $\sum c_{i}\sigma\left(f\left(x_{i}\right)\right)$
is almost surely zero. If $\Sigma^{\left(L\right)}$ is positive-definite,
the Gaussian $\left(f\left(x_{i}\right)\right)_{i=1,\ldots d}$ is
non-degenerate, so this only occurs when $c_{1}=\cdots=c_{d}=0$ since
$\sigma$ is assumed to be non-constant. This shows that the positive-definiteness
of $\Sigma^{\left(L+1\right)}$ is implied by that of $\Sigma^{\left(L\right)}$.
By induction, if $\Sigma^{\left(2\right)}$ is positive-definite,
we obtain that all $\Sigma^{\left(L\right)}$ with $L\geq2$ are positive-definite
as well. By the first step this hence implies that $\Theta^{\left(L\right)}$ is positive-definite
as well.
\item By the previous steps, to prove the proposition, it suffices to show
the positive-definitess of $\Sigma^{\left(2\right)}$ on the unit
sphere $\mathbb{S}^{n_{0}-1}$. We have
\[
\Sigma^{\left(2\right)}\left(x,x'\right)=\mathbb{E}_{\left(X,Y\right)\sim\mathcal{N}\left(0,\tilde{\Sigma}\right)}\left[\sigma\left(X\right)\sigma\left(Y\right)\right]+\beta^{2}
\]
where 
\[
\tilde{\Sigma}=\left(\begin{array}{cc}
\frac{1}{n_{0}}+\beta^{2} & \frac{1}{n_{0}}x^{T}x'+\beta^{2}\\
\frac{1}{n_{0}}x^{T}x+\beta^{2} & \frac{1}{n_{0}}+\beta^{2}
\end{array}\right).
\]
A change of variables then yields 
\begin{equation}
\mathbb{E}_{\left(X,Y\right)\sim\mathcal{N}\left(0,\tilde{\Sigma}\right)}\left[\sigma\left(X\right)\sigma\left(Y\right)\right]+\beta^{2}=\hat{\mu}\left(\frac{n_{0}\beta^{2}+x^{T}x'}{n_{0}\beta^{2}+1}\right)+\beta^{2},\label{eq:from-sigma-to-mu-hat}
\end{equation}
where $\hat{\mu}:\left[-1,1\right]\to\mathbb{R}$ is the dual in the
sense of Lemma \ref{lem:daniely} of the function $\mu:\mathbb{R}\to\mathbb{R}$
defined by $\mu\left(x\right)=\sigma\left(x\sqrt{\frac{1}{n_{0}}+\beta^{2}}\right)$. 
\item Writing the expansion of $\mu$ in Hermite polynomials $\left(h_{i}\right)_{i\geq0}$
\[
\mu=\sum_{i=0}^{\infty}a_{i}h_{i},
\]
we obtain that $\hat{\mu}$ is given by the power series
\[
\hat{\mu}\left(\rho\right)=\sum_{i=0}^{\infty}a_{i}^{2}\rho^{i},
\]
Since $\sigma$ is non-polynomial, so is $\mu$, and as a result,
there is an infinite number of nonzero $a_{i}$'s in the above sum. 
\item Using (\ref{eq:from-sigma-to-mu-hat}) above, we obtain that 
\[
\Sigma^{\left(2\right)}\left(x,x'\right)=\nu\left(x^{T}x'\right),
\]
where $\nu:\mathbb{R}\to\mathbb{R}$ is defined by 
\[
\nu\left(\rho\right)=\beta^{2}+\sum_{i=0}^{\infty}a_{i}\left(\frac{n_{0}\beta^{2}+\rho}{n_{0}\beta^{2}+1}\right)^{i},
\]
where the $a_{i}$'s are the coefficients of the Hermite expansion
of $\mu$. Now, observe that by the previous step, the power series
expansion of $\nu$ contains both an infinite number of nonzero even
terms and an infinite number of nonzero odd terms. This enables one
to apply Theorem \ref{thm:schoenberg} to obtain that $\Sigma^{\left(2\right)}$
is indeed positive-definite, thereby concluding the proof. 
\end{enumerate}
\end{proof}
\begin{rem}
Using similar techniques to the one applied in the proof above, one
can show a converse to Proposition \ref{prop:pos-def}: if the nonlinearity
$\sigma$ is a polynomial, the corresponding NTK $\Theta^{\left(2\right)}$
is not positive-definite $\mathbb{S}^{n_{0}-1}$ for certain input
dimensions $n_{0}$.
\end{rem}

\end{document}